\crefname{equation}{}{} % Default to eqref for equations
\crefname{section}{Sec.}{Sec.}
\newcommand{\s}{\mathbf{s}}
\newcommand{\z}{\mathbf{z}}
\newcommand{\argmax}{\operatornamewithlimits{argmax}}
\newcommand{\todo}[1]{}%{{\color{cyan} TODO: #1}}
\newcommand{\Rbret}{\overline{R}^{\mathrm{ret}}}
\newcommand{\Rret}{R^{\mathrm{ret}}}
\newcommand{\Rsafe}{R^{\mathrm{safe}}}
\newcommand{\Rreach}{R^{\mathrm{reach}}}
\newcommand{\pullText}{\vspace{0.cm}}
\newcommand{\pullCaption}{\vspace{-0.1cm}}
\newtheorem{lemma}{Lemma}
\title{Safe Exploration in Finite Markov Decision Processes with Gaussian Processes}
\author{
  Matteo Turchetta \\
  ETH Zurich\\
  \texttt{matteotu@ethz.ch} \\
  %% examples of more authors
   \And
  Felix Berkenkamp \\
  ETH Zurich \\
  \texttt{befelix@ethz.ch} \\
  \And
  Andreas  Krause\\
  ETH Zurich\\
  \texttt{krausea@ethz.ch} \\
}
\begin{document}

\maketitle

%!TEX root = ../root.tex

\begin{abstract}
In classical reinforcement learning agents accept arbitrary short term loss for long term gain when exploring their environment. This is infeasible for safety critical applications such as robotics, where even a single unsafe action may cause system failure or harm the environment. In this paper, we address the problem of safely exploring finite Markov decision processes (MDP). We define safety in terms of an a priori unknown safety constraint that depends on states and actions and satisfies certain regularity conditions expressed via a Gaussian process prior. We develop a novel algorithm, \textsc{SafeMDP}, for this task and prove that it completely explores the safely reachable part of the MDP without violating the safety constraint. To achieve this, it cautiously explores safe states and actions in order to gain statistical confidence about the safety of unvisited state-action pairs from noisy observations collected while navigating the environment. Moreover, the algorithm explicitly considers reachability when exploring the MDP, ensuring that it does not get stuck in any state with no safe way out. We demonstrate our method on digital terrain models for the task of exploring an unknown map with a rover.
\end{abstract}

%!TEX root = ../root.tex

\section{Introduction}
\label{sec:introduction}

Today's robots are required to operate in variable and often unknown environments. The traditional solution is to specify all potential scenarios that a robot may encounter during operation~\textit{a priori}. This is time consuming or even infeasible. As a consequence, robots need to be able to learn and adapt to unknown environments autonomously~\cite{Kober2013Reinforcement,Argall2009Survey}. While exploration algorithms are known, safety is still an open problem in the development of such systems~\cite{Schaal2010Learning}. In fact, most learning algorithms allow robots to make unsafe decisions during exploration. This can damage the platform or its environment.

% In practice there are typically many unsafe actions. Consider a rover that explores and unknown map. If safety is not considered, an exploration algorithm might lead to the robot going down a hill that is to steep so that it falls over. Even if no hardware failures occur, going down a steep hill may mean that the robot cannot climb up again, making the robot stuck. In such situations it would be preferable if the learning system performed a cautious exploration that takes into account what can be considered safe given the available data.

In this paper, we provide a solution to this problem and develop an algorithm that enables agents to safely and autonomously explore unknown environments. Specifically, we consider the problem of exploring a Markov decision process (MDP), where it is~\textit{a priori} unknown which state-action pairs are safe. Our algorithm cautiously explores this environment without taking actions that are unsafe or may render the exploring agent stuck.

\textbf{Related Work.} Safe exploration is an open problem in the reinforcement learning community and several definitions of safety have been proposed~\cite{Pecka2014Safe}. In risk-sensitive reinforcement learning, the goal is to maximize the expected return for the worst case scenario~\cite{Coraluppi1999Risksensitive}. However, these approaches only minimize risk and do not treat safety as a hard constraint. For example, \citet{Geibel2005RiskSensitive} define risk as the probability of driving the system to a previously known set of undesirable states. The main difference to our approach is that we do not assume the undesirable states to be known \textit{a priori}.
\citet{Garcia2012Safe} propose to ensure safety by means of a backup policy; that is, a policy that is known to be safe in advance. Our approach is different, since it does not require a backup policy but only a set of initially safe states from which the agent starts to explore. Another approach that makes use of a backup policy is shown by~\citet{Hans2008Safe}, where safety is defined in terms of a minimum reward, which is learned from data.

\citet{Moldovan2012Safe} provide probabilistic safety guarantees at every time step by optimizing over ergodic policies; that is, policies that let the agent recover from any visited state. This approach needs to solve a large linear program at every time step, which is computationally demanding even for small state spaces. Nevertheless, the idea of ergodicity also plays an important role in our method. In the control community, safety is mostly considered in terms of stability or constraint satisfaction of controlled systems. \citet{Akametalu2014Reachability} use reachability analysis to ensure stability under the assumption of bounded disturbances. The work in \cite{Berkenkamp2015Safe} uses robust control techniques in order to ensure robust stability for model uncertainties, while the uncertain model is improved.

Another field that has recently considered safety is Bayesian optimization~\cite{Mockus1989Bayesian}. There, in order to find the global optimum of an \textit{a priori} unknown function~\cite{Srinivas2010Gaussian}, regularity assumptions in form of a Gaussian process~(GP)~\cite{Rasmussen2006Gaussian} prior are made. The corresponding GP posterior distribution over the unknown function is used to guide evaluations to informative locations. In this setting, safety centered approaches include the work of \citet{Sui2015Safe} and~\citet{Schreiter2015Safe}, where the goal is to find the safely reachable optimum without violating an~\textit{a priori} unknown safety constraint at any evaluation. To achieve this, the function is cautiously explored, starting from a set of points that is known to be safe initially. The method in~\cite{Sui2015Safe} was applied to the field of robotics to safely optimize the controller parameters of a quadrotor vehicle~\cite{Berkenkamp2016Safe}. However, they considered a bandit setting, where at each iteration any arm can be played. In contrast, we consider exploring an MDP, which introduces restrictions in terms of reachability that have not been considered in Bayesian optimization before.

\textbf{Contribution.}
We introduce \textsc{SafeMDP}, a novel algorithm for safe exploration in MDPs. We model safety via an~\textit{a priori} unknown constraint that depends on state-action pairs. Starting from an initial set of states and actions that are known to satisfy the safety constraint, the algorithm exploits the regularity assumptions on the constraint function in order to determine if nearby, unvisited states are safe. This leads to safe exploration, where only state-actions pairs that are known to fulfil the safety constraint are evaluated.
The main contribution consists of extending the work on safe Bayesian optimization in~\cite{Sui2015Safe} from the bandit setting to deterministic, finite MDPs. In order to achieve this, we explicitly consider not only the safety constraint, but also the reachability properties induced by the MDP dynamics. We provide a full theoretical analysis of the algorithm. It provably enjoys similar safety guarantees in terms of ergodicity as discussed in~\cite{Moldovan2012Safe}, but at a  reduced computational cost. The reason for this is that our method separates safety from the reachability properties of the MDP. Beyond this, we prove that \textsc{SafeMDP} is able to fully explore the safely reachable region of the MDP, without getting stuck or violating the safety constraint with high probability. To the best of our knokwledge, this is the first full exploration result in MDPs subject to a safety constraint. We validate our method on an exploration task, where a rover has to explore an \textit{a priori} unknown map.

%!TEX root = ../root.tex

\section{Problem Statement}
\label{sec:problem_statement}

In this section, we define our problem and assumptions.
The unknown environment is modeled as a finite, deterministic MDP~\cite{Sutton1998Reinforcement}. Such a MDP is a tuple~${\langle \mathcal{S}, \mathcal{A}(\cdot), f(\s, a), r(\s, a) \rangle}$ with a finite set of states~$\mathcal{S}$, a set of state-dependent actions~$\mathcal{A}(\cdot)$, a known, deterministic transition model~$f(\s,a)$, and reward function~$r(\s, a)$.
In the typical reinforcement learning framework, the goal is to maximize the cumulative reward.
In this paper, we consider the problem of safely exploring the MDP. Thus, instead of aiming to maximize the cumulative rewards, we define~$r(\s, a)$ as an \textit{a priori} unknown safety feature. Although~$r(\s, a)$ is unknown, we make regularity assumptions about it to make the problem tractable.
When traversing the MDP, at each discrete time step,~$k$, the  agent has to decide which action and thereby state to visit next. We assume that the underlying system is safety-critical and that for any visited state-action pair,~${(\s_k, a_k)}$, the unknown, associated safety feature,~${r(\s_k, a_k)}$, must be above a safety threshold,~$h$. While the assumption of deterministic dynamics does not hold for general MDPs, in our framework, uncertainty about the environment is captured by the safety feature. If requested, the agent can obtain noisy measurements of the safety feature,~$r(\s_k, a_k)$, by taking action~$a_k$ in state~$\s_k$. The index~$t$ is used to index measurements, while~$k$ denotes movement steps. Typically~${k \gg t}$.

It is hopeless to achieve the goal of safe exploration unless the agent starts in a safe location. Hence, we assume that the agent stays in an initial set of state action pairs,~$S_0$, that is known to be safe~\textit{a priori}.
The goal is to identify the maximum safely reachable region starting from $S_0$, without visiting any unsafe states. For clarity of exposition, we assume that safety depends on states only; that is,~${r(\s, a) = r(\s)}$. We provide an extension to safety features that also depend on actions in~\cref{sec:extensions}.

%\subsection{Assumptions on the reward function}
%\label{sec:regularity_assumptions}

\textbf{Assumptions on the reward function} Ensuring that all visited states are safe without any prior knowledge about the safety feature is an impossible task (e.g., if the safety feature is discontinuous). However, many practical safety features exhibit some regularity, where similar states will lead to similar values of~$r$.

In the following, we assume that~$\mathcal{S}$ is endowed with a positive definite kernel function~$k(\cdot,\cdot)$ and that the function $r(\cdot)$ has bounded norm in the associated Reproducing Kernel Hilbert Space (RKHS)~\cite{Scholkopf2002Learning}. The norm induced by the inner product of the RKHS indicates the smoothness of functions with respect to the kernel. This assumption allows us to model~$r$ as a GP~\cite{Srinivas2010Gaussian},~$r(\s)\sim \mathcal{GP}(\mu(\s),\,k(\s,\s'))$.
A GP is a probability distribution over functions that is fully specified by its mean function~$\mu(\s)$ and its covariance function~$k(\s,\s')$. The randomness  expressed by this distribution captures our uncertainty about the environment. We assume~${\mu(\s)=0}$ for all~${\s\in \mathcal{S}}$, without loss of generality. The posterior distribution over $r(\cdot)$ can be computed analytically, based on~$t$ measurements at states~${D_t = \{ \s_1, \dots, \s_t \} \subseteq \mathcal{S}}$ with measurements,~${\mathbf{y}_t=[r(\s_1)+\omega_1 \ldots r(\s_t)+\omega_t]^\mathrm{T}}$, that are corrupted by zero-mean Gaussian noise,~${\omega_t \sim \mathcal{N}(0, \sigma^2)}$. The posterior  is a~$\mathcal{GP}$ distribution with mean~${\mu_t(\s) = \mathbf{k}_t(\s)^\mathrm{T} (\mathbf{K}_t + \sigma^2 \mathbf{I})^{-1} \mathbf{y}_t}$,  variance~${\sigma_t(\s) = k_t(\s,\s)}$, and covariance~${k_t(\s,\s') = k(\s,\s') - \mathbf{k}_t(\s)^\mathrm{T} (\mathbf{K}_t + \sigma^2 \mathbf{I})^{-1} \mathbf{k}_t(\s')}$,
where~${\mathbf{k}_t(\s)=[k(\s_1,\s), \dots, k(\s_t,\s)]^\mathrm{T}}$ and $\mathbf{K}_t$ is the positive definite kernel matrix,~$[k(\s,\s')]_{\s,\s' \in D_t}$. The identity matrix is denoted by $\mathbf{I} \in \mathbb{R}^{t \times t}$.

We also assume $L$-Lipschitz continuity of the safety function with respect to some metric~$d(\cdot,\, \cdot)$ on~$\mathcal{S}$. This is guaranteed by many commonly used kernels with high probability~\cite{Srinivas2010Gaussian,Ghosal2006Posterior}.

\begin{figure}[t]
%\vspace{1mm}
\centering
\includegraphics[scale=0.8]{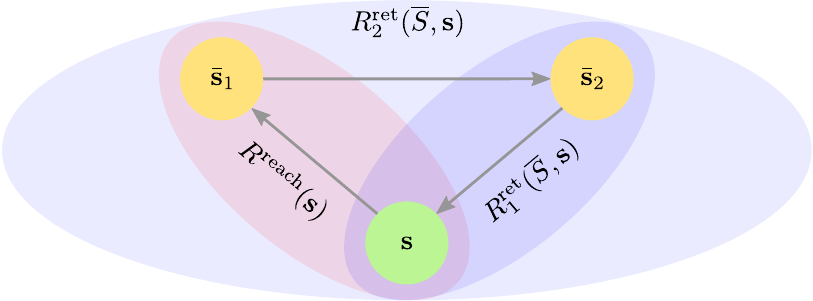}
\pullCaption
\caption{Illustration of the set operators with $\overline{S}=\{\bar{\s}_1, \bar{\s}_2 \}$. The set~${S=\{\s\}}$ can be reached from~$\overline{\s}_2$ in one step and from~$\overline{\s}_1$ in two steps, while only the state~$\overline{\s}_1$ can be reached from~$\s$. Visiting~$\overline{\s}_1$ is safe; that is, it is above the safety threshold, is reachable, and there exists a safe return path through $\s_2$.}
\label{fig:R_ret}
\pullText
\end{figure}

\textbf{Goal} In this section, we define the goal of safe exploration. In particular, we ask what the best that any algorithm may hope to achieve is. Since we only observe noisy measurements, it is impossible to know the underlying safety function~$r(\cdot)$ exactly after a finite number of measurements. Instead, we consider algorithms that only have knowledge of~$r(\cdot)$ up to some statistical confidence~$\epsilon$. Based on this confidence within some safe set~$S$, states with small distance to~$S$ can be classified to satisfy the safety constraint using the  Lipschitz continuity of~$r(\cdot)$. The resulting set of safe states is
\begin{equation}
\Rsafe_\epsilon(S) = S \cup \{\s \in \mathcal{S} \,|\, \exists \s' \in S\colon r(\s')-\epsilon- L d(\s,\s') \geq h\},
\label{eq:safe_set}
\end{equation}
which contains states that can be classified as safe given the information about the states in~$S$. While~\cref{eq:safe_set} considers the safety constraint, it does not consider any restrictions put in place by the structure of the MDP. In particular, we may not be able to visit every state in~$\Rsafe_\epsilon(S)$ without visiting an unsafe state first. As a result, the agent is further restricted to
\begin{equation}
\Rreach(S) = S \cup \{\s \in \mathcal{S} \,\vert\, \exists \s'\in S, a \in \mathcal{A}(\s')\colon \s=f(\s',a)\},
\label{eq:Rreach}
\end{equation}
the set of all states that can be reached starting from the safe set in one step. These states are called the one-step safely reachable states. However, even restricted to this set, the agent may still get stuck in a state without any safe actions. We define
\begin{equation}
\Rret(S, \overline{S})=\overline{S} \cup \{\s \in S \,\vert\, \exists a \in \mathcal{A}(\s)\colon f(\s,a) \in \overline{S}\}
\label{eq:Rret}
\end{equation}
as the set of states that are able to return to a set~$\overline{S}$ through some other set of states,~$S$, in one step. In particular, we care about the ability to return to a certain set through a set of safe states $S$. Therefore, these are called the one-step safely returnable states. In general, the return routes may require taking more than one action, see~\cref{fig:R_ret}. The $n$-step returnability operator
$\Rret_n(S, \overline{S}) = \Rret(S, \Rret_{n-1}(S, \overline{S}))$
with
$\Rret_1(S, \overline{S}) = \Rret(S, \overline{S})$
%
% $\Rret_n(S, \overline{S})={R^{\mathrm{ret}}(S,R^{\mathrm{ret}}(S,\ldots))},$
%
considers these longer return routes by repeatedly applying the return operator, $\Rret$ in~\cref{eq:Rret}, $n$ times. The limit
$\Rbret(S, \overline{S})=\lim_{n \to \infty} \Rret_n(S, \overline{S})$
contains all the states that can reach the set~$\overline{S}$ through an arbitrarily long path in~$S$. % Notice that~${\Rbret(\overline{S}, S) \subseteq S \cup \overline{S}}$.

For safe exploration of MDPs, all of the above are requirements; that is, any state that we may want to visit needs to be safe (satisfy the safety constraint), reachable, and we must be able to return to safe states from this new state. Thus, any algorithm that aims to safely explore an MDP is only allowed to visit states in
\begin{equation}
R_{\epsilon}(S) = \Rsafe_\epsilon(S) \cap \Rreach(S) \cap \Rbret(\Rsafe_\epsilon(S),\, S),
\label{eq:R_eps_1}
\end{equation}
which is the intersection of the three safety-relevant sets. Given a safe set~$S$ that fulfills the safety requirements,~${\Rbret(\Rsafe_\epsilon(S),\, S)}$ is the set of states from which we can return to~$S$ by only visiting states that can be classified as above the safety threshold. By including it in the definition of $R_{\epsilon}(S)$, we avoid the agent getting stuck in a state without an action that leads to another safe state to take.

Given knowledge about the safety feature in~$S$ up to~$\epsilon$ accuracy thus allows us to expand the set of safe ergodic states to~$R_{\epsilon}(S)$. Any algorithm that has the goal of exploring the state space should consequently explore these newly available safe states and gain new knowledge about the safety feature to potentially further enlargen the safe set. The safe set after~$n$ such expansions can be found by repeatedly applying the operator in~\cref{eq:R_eps_1}:
$R^n_\epsilon(S) = R_\epsilon(R^{n-1}_\epsilon(S))$ with $R^1_\epsilon = R_\epsilon(S)$.
Ultimately, the size of the safe set is bounded by surrounding unsafe states or the number of states in~$\mathcal{S}$. As a result, the biggest set that any algorithm may classify as safe without visiting unsafe states is given by taking the limit,
$\overline{R}_{\epsilon}(S)= \lim_{n\to\infty} R^n_{\epsilon}(S).$
%\begin{equation}
%\overline{R}_{\epsilon}(S)= \lim_{n\to\infty} R^n_{\epsilon}(S).
%\label{eq:R_eps_bar}
%\end{equation}

Thus, given a tolerance level~$\epsilon$ and an initial safe seed set~$S_0$,~$\overline{R}_{\epsilon}(S_0)$ is the set of states that any algorithm may hope to classify as safe. Let~$S_t$ denote the set of states that an algorithm determines to be safe at iteration~$t$. In the following, we will refer to complete, safe exploration whenever an algorithm fulfills~$\overline{R}_{\epsilon}(S_0)\subseteq \lim_{t \to \infty}S_t \subseteq \overline{R}_0(S_0)$; that is, the algorithm classifies every safely reachable state up to~$\epsilon$ accuracy as safe, without misclassification or visiting unsafe states.

%!TEX root = ../root.tex

\section{\textsc{SafeMDP} Algorithm}
\label{sec:algorithm}

We start by giving a high level overview of the method. The \textsc{SafeMDP} algorithm relies on a GP model of $r$  to make predictions about the safety feature and uses the predictive uncertainty to guide the safe exploration. In order to guarantee safety, it maintains two sets. The first set,~$S_t$, contains all states that can be classified as satisfying the safety constraint using the GP posterior, while the second one,~$\hat{S}_t$, additionally considers the ability to reach points in~$S_t$ and the ability to safely return to the previous safe set, $\hat{S}_{t-1}$. The algorithm ensures safety and ergodicity by only visiting states in~$\hat{S}_t$.
In order to expand the safe region, the algorithm visits states in~$G_t \subseteq \hat{S}_t$, a set of candidate states that, if visited, could expand the safe set. Specifically, the algorithm selects the most uncertain state in~$G_t$, which is the safe state that we can gain the most information about. We move to this state via the shortest safe path, which is guaranteed to exist (\cref{thm:ergodicity}). The algorithm is summarized in~\cref{alg:SOMDP}.

\begin{algorithm}[t]
   \caption{Safe exploration in MDPs (\bf{SafeMDP})}
   \label{alg:SOMDP}
\begin{algorithmic}
   \STATE {\bfseries Inputs:} \parbox[t]{0.75\linewidth}{%
           states $\mathcal{S}$,\,
           actions $\mathcal{A},$\,
           transition function $f(\s,a),\,$
           kernel $k(\s,\s')$, %\newline
           Safety threshold $h$,
           Lipschitz constant $L$, %\newline
           Safe seed $S_0$.
            }
    \STATE $C_0(s) \gets [h,\infty)$ for all $s\in S_0$
    %\STATE $C_0(s) \gets \mathbb{R}$ for all $s\in \mathcal{S}\setminus S_0$
    %\STATE $Q_0(s) \gets \mathbb{R}$ for all $s\in \mathcal{S}$
    %
    \FOR{$t=1,2,\ldots$}
        \STATE $S_t \gets \{ \s \in \mathcal{S} \,|\, \exists \s' \in \hat{S}_{t-1} \colon  {l_t(\s')- L d(\s,\s') \geq h}\}$
        \label{alg:safe_set}
        \STATE$\hat{S}_t \gets \{ \s \in S_t \,|\,$
        %\parbox[t]{.4\linewidth}{%
            $\s \in R^{\mathrm{reach}}(\hat{S}_{t-1})$,
            $\s \in \overline{R}^{\mathrm{ret}}(S_t,\hat{S}_{t-1}) \}$% }
        \STATE $G_t \gets \{\s \in \hat{S}_t \,|\, g_t(\s) > 0 \}$
        \label{alg:expander_set}
        \STATE $\s_t \gets \argmax_{\s \in G_t} \, w_t(\s)$
        \label{alg:acquisition}
        \STATE Safe Dijkstra in~$S_t$ from~$\s_{t-1}$ to~$\s_t$
        %
        %\STATE $y_t\gets r(\s_t)+\omega_t$
        \label{alg:evaluate}
        \STATE Update GP with~$\s_t$ and~$y_t\gets r(\s_t)+\omega_t$
        \label{alg:update_gp}
        \STATE\algorithmicif {$~G_t  = \emptyset \textnormal{\textbf{ or }} \displaystyle \max_{\s \in G_t}\, w_t(\s)\leq \epsilon$} \algorithmicthen \unskip ~ Break
%        \IF{$G_t  = \emptyset \textnormal{\textbf{ or }} \displaystyle \max_{\s \in G_t}\, w_t(\s)\leq \epsilon$}
%            \STATE Break
%        \ENDIF
    \ENDFOR
\end{algorithmic}
\end{algorithm}

%\subsection{Initialization}
%\label{sec:initialization}

\textbf{Initialization.} The algorithm relies on an initial safe set~$S_0$ as a starting point to explore the MDP. These states must be safe; that is,~${r(s) \geq h}$, for all~${s \in S_0}$.  They must also fulfill the reachability and returnability requirements from \cref{sec:problem_statement}.
Consequently, for any two states,~${\s,\s' \in S_0}$, there must exist a path in~$S_0$ that connects them:~${\s' \in \Rbret(S_0, \{\s\})}$.
While this may seem restrictive, the requirement is, for example, fulfilled by a single state with an action that leads back to the same state.

%In fact, as it can be seen in the next section, our classification scheme ensures that every state in $\hat{S}_t$ can be reached from at least one state in $S_0$ and can recover to at least one state in $S_0$. Therefore, if it is ensured by initialization that every state in $S_0$ can be reached from any other state in $S_0$, it is easy to prove ergodicity within $\hat{S}_t$ for every $t\geq 0$.

%\subsection{Classification}
%\label{sec:expanders}
\textbf{Classification.} In order to safely explore the MDP, the algorithm must determine which states are safe without visiting them. The regularity assumptions introduced in~\cref{sec:problem_statement} allow us to model the safety feature as a~$\mathcal{GP}$, so that we can use the uncertainty estimate of the GP model in order to determine a confidence interval within which the true safety function lies with high probability. For every state~$\s$, this confidence interval has the form
$Q_t(\s) = \big[ \mu_{t-1}(\s)\pm\sqrt{\beta_t}\sigma_{t-1}(\s)\big],$
where~$\beta_t$ is a positive scalar that determines the amplitude of the interval. We discuss how to select~$\beta_t$ in~\cref{sec:theory}.

Rather than defining high probability bounds on the values of~$r(\s)$ directly in terms of~$Q_t$, we consider the intersection of the sets~$Q_t$ up to iteration~$t$, ${C_t(\s)=Q_t(\s)\cap C_{t-1}(\s)}$ with ${C_0(\s) = [h, \infty]}$ for safe states~${\s \in S_0}$ and~${C_0(\s) = \mathbb{R}}$ otherwise. This choice ensures that set of states that we classify as safe does not shrink over iterations and is justified by the selection of~$\beta_t$ in~\cref{sec:theory}.
Based on these confidence intervals, we define a lower bound,~${l_t(\s)=\mathrm{min}~C_t(\s)}$, and upper bound,~${u_t(\s)=\mathrm{max}~C_t(\s)}$, on the values that the safety features~$r(\s)$ are likely to take based on the data obtained up to iteration~$t$.
Based on these lower bounds, we define
\begin{equation} \label{def:safety}
S_t = \big\{ \s\in \mathcal{S} \,\vert\, \exists \s'\in \hat{S}_{t-1} :~{l_t(\s')-Ld(\s, \s') \geq h} \big\}
\end{equation}
as the set of states that fulfill the safety constraint on~$r$ with high probability by using the Lipschitz constant to generalize beyond the current safe set. Based on this classification, the set of ergodic safe states is the set of states that achieve the safety threshold and, additionally, fulfill the reachability and returnability properties discussed in~\cref{sec:problem_statement}:
\begin{equation}
\hat{S}_t = \big\{ \s\in S_t \,\vert\, \s \in R^{\mathrm{reach}}(\hat{S}_{t-1})\cap \overline{R}^{\mathrm{ret}}(S_t,\hat{S}_{t-1}) \big\}.
\end{equation}

\textbf{Expanders.} With the set of safe states defined, the task of the algorithm is to identify and explore states that might expand the set of states that can be classified as safe. We use the uncertainty estimate in the GP in order to define an optimistic set of expanders,
\begin{equation} \label{def:G}
G_t = \{\s \in \hat{S}_t \,\vert\, g_t(\s)>0\},
\end{equation}
where
$
g_t(\s) = \big| \{\s' \in \mathcal{S}\setminus  S_t \,\vert\, u_t(\s)-L d(\s,\s')\geq h\} \big|.
$
The function~$g_t(\s)$ is positive whenever an optimistic measurement at~$\s$, equal to the upper confidence bound,~$u_t(\s)$, would allow us to determine that a previously unsafe state indeed has value~$r(\s')$ above the safety threshold. Intuitively, sampling~$\s$ might lead to the expansion of~$S_t$ and thereby~$\hat{S}_t$. The set $G_t$ explicitly considers the expansion of the safe set as exploration goal, see \cref{fig:expanders} for a graphical illustration of the set.

\textbf{Sampling and shortest safe path.} The remaining part of the algorithm is concerned with selecting safe states to evaluate and finding a safe path in the MDP that leads towards them. The goal is to visit states that allow the safe set to expand as quickly as possible, so that we do not waste resources when exploring the MDP. We use the GP posterior uncertainty about the states in~$G_t$ in order to make this choice. At each iteration~$t$, we select as next target sample the state with the highest variance in~$G_t$,
${
\s_t = \argmax_{\s \in G_t} \, w_t(\s),
}$
where $w_t(\s) = u_t(\s) - l_t(\s)$. This choice is justified, because while all points in~$G_t$ are safe and can potentially enlarge the safe set, based on one noisy sample we can gain the most information from the state that we are the most uncertain about. This design choice maximizes the knowledge acquired with every sample but can lead to long paths between measurements within the safe region. Given~$\s_t$, we use Dijkstra's algorithm within the set~$\hat{S}_t$ in order to find the shortest safe path to the target from the current state,~$s_{t-1}$. Since we require reachability and returnability for all safe states, such a path is guaranteed to exist.
We terminate the algorithm when we reach the desired accuracy; that is,~$\argmax_{\s \in G_t} \, w_t(\s) \leq \epsilon$.

\begin{figure}[t]
\centering
\subfloat[States are classified as safe (above the safety constraint, dashed line) according to the confidence intervals of the GP model (red bar). States in the green bar can expand the safe set if sampled, $G_t$.] {\includegraphics[scale=1]{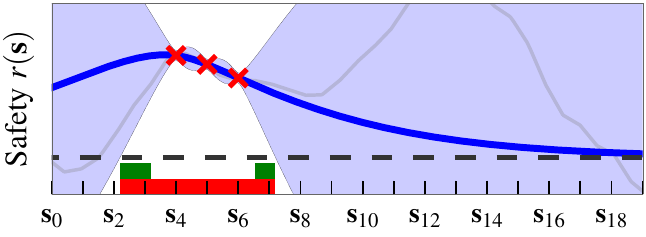}\label{fig:expanders}}\hfill
\subfloat[Modified MDP model that is used to encode safety features that depend on actions. In this model, actions lead to abstract action-states~$\s_a$, which only have one available action that leads to~$f(\s,a)$.]{\includegraphics[scale=0.8]{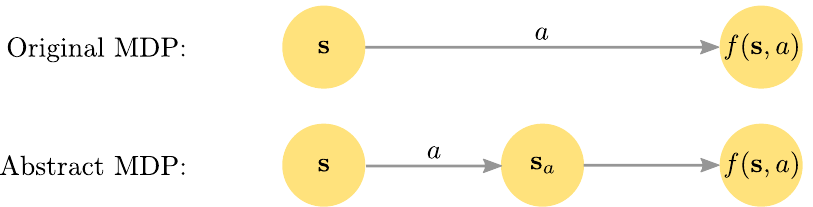} \label{fig:abstract_model}}
\pullText
\end{figure}

% \section{Extensions}
\label{sec:extensions}

\textbf{Action-dependent safety.} So far, we have considered safety features that only depend on the states,~$r(\s)$. In general, safety can also depend on the actions,~$r(\s, a)$. In this section, we introduce a modified MDP that captures these dependencies without modifying the algorithm. The modified MDP is equivalent to the original one in terms of dynamics,~$f(\s, a)$. However, we introduce additional action-states~$\s_a$ for each action in the original MDP. When we start in a state~$\s$ and take action~$a$, we first transition to the corresponding action-state and from there transition to~$f(\s, a)$ deterministically. This model is illustrated in~\cref{fig:abstract_model}. Safety features that depend on action-states,~$s_a$, are equivalent to action-dependent safety features. The \textsc{SafeMDP} algorithm can be used on this modified MDP without modification. See the experiments in~\cref{sec:experiments} for an example.

%The framework can be extended to consider stochastic transitions. There safety and ergodicity have to consider the probability of leaving the safe set due to uncertain transitions. For most cases, it would be difficult to achieve satisfactory exploration while guaranteeing safety with high probability for all iterations. However, if instead the goal was to guarantee safety with a high probability for each time step~$k$ individually, as was the case in~\cite{Moldovan2012Safe}, this approach can be feasible.

\section{Theoretical Results}
\label{sec:theory}

The safety and exploration aspects of the algorithm that we presented in the previous section rely on the correctness of the confidence intervals~$C_t(\s)$.
In particular, they require that the true value of the safety feature,~$r(\s)$, lies within~$C_t(\s)$ with high probability for all $\s \in \mathcal{S}$ and all iterations $t > 0$. Furthermore, these confidence intervals have to shrink sufficiently fast over time.
The probability of~$r$ taking values within the confidence intervals depends on the scaling factor~$\beta_t$. This scaling factor trades off conservativeness in the exploration for the probability of unsafe states being visited.
Appropriate selection of~$\beta_t$ has been studied by~\citet{Srinivas2010Gaussian} in the multi-armed bandit setting. Even though our framework is different, their setting  can be applied to our case. We choose,
\begin{equation} \label{eq:beta}
\beta_t=2B+300\gamma_t\log^3(t / \delta),
\end{equation}
where~$B$ is the bound on the RKHS norm of the function~$r(\cdot)$, $\delta$ is the probability of visiting unsafe states, and $\gamma_t$ is the maximum mutual information that can be gained about~$r(\cdot)$ from~$t$ noisy observations; that is,
$
\gamma_t=\max_{|A|\leq t}I(r, \mathbf{y}_A).
$
The information capacity~$\gamma_t$ has a sublinear dependence on~$t$ for many commonly used kernels~\cite{Srinivas2010Gaussian}. The choice of~$\beta_t$ in~\cref{eq:beta} is justified by the following Lemma, which follows from~\cite[Theorem 6]{Srinivas2010Gaussian}:

\begin{restatable}{lemma}{ChoiceOfBeta}
Assume that~${\| r \|_k^2 \leq B}$, and that the noise~$\omega_t$ is zero-mean conditioned on the history, as well as uniformly bounded by~$\sigma$ for all~${t > 0}$. If~$\beta_t$ is chosen as in~\cref{eq:beta}, then, for all~${t > 0}$ and all~$\s \in \mathcal{S}$, it holds with probability at least~${1 - \delta}$ that~$r(\s) \in C_t(\s)$.
\label{thm:beta}
\end{restatable}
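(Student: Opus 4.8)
The plan is to obtain the lemma as an almost immediate corollary of the concentration bound of \citet{Srinivas2010Gaussian} and then propagate it through the recursive definition of $C_t$.

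First I would invoke \cite[Theorem 6]{Srinivas2010Gaussian}. Under the stated hypotheses---$\|r\|_k^2 \le B$, and noise that is zero-mean conditioned on the history and bounded by $\sigma$ (hence conditionally $\sigma$-sub-Gaussian)---with the choice of $\beta_t$ in \cref{eq:beta} that theorem guarantees that the event
\[
\mathcal{E} = \Big\{\, |r(\s) - \mu_{t-1}(\s)| \le \sqrt{\beta_t}\,\sigma_{t-1}(\s)\ \text{ for all } \s \in \mathcal{S}\ \text{and all}\ t \ge 1 \,\Big\}
\]
has probability at least $1-\delta$. Two observations make the transfer to our setting clean: (i) $\mathcal{S}$ is finite, so the supremum over the domain in the original (continuous-bandit) statement collapses to a finite maximum and no discretization argument is needed; (ii) the query sequence $\s_1,\s_2,\dots$ is chosen adaptively from past data, but this is precisely the self-normalized martingale setting the theorem is proved in, so adaptivity causes no difficulty. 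On $\mathcal{E}$ we have, by the definition of $Q_t$, that $r(\s) \in Q_t(\s)$ for every $\s \in \mathcal{S}$ and every $t \ge 1$.

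Second I would show, by induction on $t$ and working on the event $\mathcal{E}$, that $r(\s) \in C_t(\s)$ for all $\s \in \mathcal{S}$. For the base case $t=0$: if $\s \notin S_0$ then $C_0(\s) = \mathbb{R} \ni r(\s)$ trivially, while if $\s \in S_0$ then $C_0(\s) = [h,\infty)$, which contains $r(\s)$ because the seed set is safe, i.e.\ $r(\s) \ge h$ for all $\s \in S_0$. For the inductive step, $C_t(\s) = Q_t(\s) \cap C_{t-1}(\s)$; on $\mathcal{E}$ we have $r(\s) \in Q_t(\s)$, and $r(\s) \in C_{t-1}(\s)$ by the inductive hypothesis, hence $r(\s) \in C_t(\s)$. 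Therefore $\mathcal{E}$ implies $r(\s) \in C_t(\s)$ for all $\s$ and all $t > 0$, and this holds with probability at least $1-\delta$.

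The only substantive work is hidden in the first step: checking that the hypotheses of \cite[Theorem 6]{Srinivas2010Gaussian} are exactly ours and that the constants in \cref{eq:beta} are the ones it requires---in particular that the adaptively chosen queries still yield a valid uniform-in-$t$ tail bound (this is where the $\log^3(t/\delta)$ union-bound factor and the $300\gamma_t$ term arise, via the information-gain bound $\tfrac12 \log\det(\mathbf{I} + \sigma^{-2}\mathbf{K}_t) \le \gamma_t$), and that moving from the Bayesian GP noise model to the agnostic bounded-noise/RKHS model only introduces the additive $2B$ term. Everything afterwards---the intersection with the earlier confidence sets and the treatment of the seed set---is elementary, and that is the portion I would actually write out in full.
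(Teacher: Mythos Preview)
Your proposal is correct and matches the paper's approach: the paper simply cites Corollary~1 in \cite{Sui2015Safe}, whose proof is precisely the two-step argument you outline (invoke \cite[Theorem~6]{Srinivas2010Gaussian} to get $r(\s)\in Q_t(\s)$ uniformly, then intersect recursively with $C_{t-1}$). The only point to flag is that your base case for $\s\in S_0$ uses the seed-set safety assumption $r(\s)\ge h$, which is not listed among the lemma's explicit hypotheses; it is, however, a standing assumption in the paper's setup, so this is harmless.
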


This Lemma states that, for~$\beta_t$ as in~\cref{eq:beta}, the safety function~$r(\s)$ takes values within the confidence intervals~$C(\s)$ with high probability. Now we show that the the safe shortest path problem has always a solution: %We have the following result:
\begin{restatable}{lemma}{Ergodicity}
Assume that~${S_0 \neq \emptyset}$ and that for all states, ${\s,\s' \in S_0}$, ${\s \in \Rbret(S_0, \{\s'\})}$. Then, when using~\cref{alg:SOMDP} under the assumptions in~\cref{thm:safety}, for all~${t > 0}$ and for all states, ${\s, \s' \in \hat{S}_t}$, ${\s \in \Rbret( S_t, \{ \s' \} )}$.
\label{thm:ergodicity}
\end{restatable}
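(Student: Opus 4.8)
The plan is to argue by induction on $t$, showing that the ergodicity property — that any two states in $\hat{S}_t$ can reach one another through a path lying entirely in $S_t$ — is preserved from one iteration to the next. The base case $t=0$ is exactly the hypothesis on $S_0$ (note $\hat{S}_0 = S_0$), together with the observation that the initial confidence bounds give $\hat{S}_0 \subseteq S_1$ and $\hat{S}_0 \subseteq \hat{S}_1$, so that we may as well start the induction from $\hat{S}_0$. For the inductive step, I would assume that for all $\s,\s' \in \hat{S}_{t-1}$ we have $\s \in \Rbret(S_{t-1}, \{\s'\})$, and I want to establish the same statement with $t$ in place of $t-1$.

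The first key step is a monotonicity observation: $l_t \geq l_{t-1}$ pointwise, because $C_t(\s) = Q_t(\s) \cap C_{t-1}(\s)$ and hence the confidence intervals only shrink, which by the definition of $S_t$ in \cref{def:safety} means $\hat{S}_{t-1}$ generalizes at least as much safety as before; combined with $\hat{S}_{t-1} \subseteq S_t$ (every state of $\hat{S}_{t-1}$ satisfies $l_t(\s) \geq l_{t-1}(\s) \geq h$, by the inductive ergodicity hypothesis which in particular forces $\hat{S}_{t-1}$ to have been classified safe) this yields $S_{t-1} \subseteq S_t$ and $\hat{S}_{t-1} \subseteq \hat{S}_t$. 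The monotonicity $\hat{S}_{t-1}\subseteq \hat{S}_t$ is the workhorse: it lets us transport return paths from iteration $t-1$ into iteration $t$, since a path in $S_{t-1}$ is also a path in $S_t$.

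Now take arbitrary $\s, \s' \in \hat{S}_t$. By the definition of $\hat{S}_t$, each of $\s$ and $\s'$ lies in $R^{\mathrm{reach}}(\hat{S}_{t-1})$ and in $\Rbret(S_t, \hat{S}_{t-1})$. The returnability membership gives a path in $S_t$ from $\s$ (resp.\ $\s'$) into $\hat{S}_{t-1}$, say reaching states $\bar\s, \bar\s' \in \hat{S}_{t-1}$. By the inductive hypothesis there is a path in $S_{t-1} \subseteq S_t$ from $\bar\s$ to $\bar\s'$. The remaining gap is to connect $\s$ (and $\s'$) \emph{back} out of $\hat{S}_{t-1}$: here I would use that $\s \in R^{\mathrm{reach}}(\hat{S}_{t-1})$ means $\s = f(\tilde\s, a)$ for some $\tilde\s \in \hat{S}_{t-1}$ and some action $a$, so concatenating a return path from $\tilde\s$ through $S_{t-1}$ and the single step to $\s$ exhibits $\s$ as reachable from any point of $\hat{S}_{t-1}$ through $S_t$. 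Stitching these pieces — $\s' \to \hat{S}_{t-1} \to \tilde\s \to \s$ — and using that all intermediate states lie in $S_t$ gives $\s \in \Rbret(S_t, \{\s'\})$, completing the induction.

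The main obstacle I anticipate is the careful bookkeeping in the inductive step: the sets $R^{\mathrm{reach}}$, $\Rret$, and $\Rbret$ are defined relative to the \emph{current} safe set, and one must be scrupulous that every edge used in the concatenated path genuinely lies in $S_t$ (not merely in $\hat{S}_t$ or in some mixture of $S_{t-1}$ and $S_t$), and that the return direction and the reach direction are composed in the correct order — returnability moves \emph{into} $\hat{S}_{t-1}$, reachability moves \emph{out of} it. Establishing the monotonicity $S_{t-1}\subseteq S_t$ and $\hat{S}_{t-1}\subseteq\hat{S}_t$ cleanly, which hinges on the nested structure $C_t = Q_t \cap C_{t-1}$ and on the fact that the reachability and returnability operators are monotone in their set arguments, is the lynchpin that makes the path-transport argument go through.
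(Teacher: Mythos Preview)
Your proposal is correct and follows essentially the same approach as the paper, just organized differently: where you run a direct induction on $t$ and factor the path through $\hat{S}_{t-1}$ using the inductive hypothesis, the paper instead proves two auxiliary lemmas (\cref{lem:ReachToSInitial,lem:ReachFromSInitial}) that unroll the returnability and reachability chains all the way down to $S_0$, and then stitches through the assumed ergodicity of $S_0$. Both arguments rest on the same three pillars you identify: the monotonicity $S_{t-1}\subseteq S_t$, $\hat{S}_{t-1}\subseteq\hat{S}_t$ (the paper's \cref{lem:non_decresing_S}), the returnability property to move from $\hat{S}_t$ back into $\hat{S}_{t-1}$, and the reachability property to move forward again. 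One minor slip: your stitched path ``$\s' \to \hat{S}_{t-1} \to \tilde\s \to \s$'' establishes $\s' \in \Rbret(S_t,\{\s\})$ rather than $\s \in \Rbret(S_t,\{\s'\})$, but the argument is symmetric in $\s,\s'$ so this is inessential.
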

This lemma states that, given an initial safe set that fulfills the initialization requirements, we can always find a policy that drives us from any state in~$\hat{S}_t$ to any other state in~$\hat{S}_t$ without leaving the  set of safe states, $S_t$. \cref{thm:beta,thm:ergodicity} have a key role in ensuring safety during exploration and, thus, in our main theoretical result:

\begin{restatable}{theorem}{mainresult}
Assume that~$r(\cdot)$ is $L$-Lipschitz continuous and that the assumptions of \cref{thm:beta} hold. Also, assume that ${S_0 \neq \emptyset}$, ${r(\s) \geq h}$ for all~${\s \in S_0}$, and that for any two states,~${\s,\s' \in S_0}$, ${\s' \in \Rbret(S_0, \{\s\})}$. Choose~$\beta_t$ as in~\cref{eq:beta}. Then, with probability at least ${1-\delta}$, we have
${
r(\s)\geq h
}$
for any $\s$ along any state trajectory induced by~\cref{alg:SOMDP} on an MDP with transition function~$f(\s,a)$.
% Let $(\s_0,\s_1,\ldots,\s_k,\ldots)$ denote a state trajectory induced by~\cref{alg:SOMDP} on an MDP with transition function~$f(\s,a)$. Then, with probability at least~${1-\delta}$,
% %
% ${
% r(\s_k)\geq h, \forall k > 0.
% }$
%
Moreover, let~$t^*$ be the smallest integer such that
${
\frac{t^*}{\beta_{t^*}\gamma_{t^*}}\geq \frac{C\, |\overline{R}_0(S_0)|}{\epsilon^2},
}$
with~${C = 8 / \log(1 + \sigma^{-2})}$. Then there exists a $t_0\leq t^*$ such that, with probability at least~${1-\delta}$,
$
\overline{R}_\epsilon(S_0) \subseteq \hat{S}_{t_0}\subseteq \overline{R}_0(S_0).
$
\label{thm:safety}
\end{restatable}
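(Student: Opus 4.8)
Throughout I would condition on the event $\mathcal{E}$ of \cref{thm:beta}, that $r(\s)\in C_t(\s)$ for all $t>0$ and $\s\in\mathcal{S}$; it has probability at least $1-\delta$ and both assertions will then hold deterministically. On $\mathcal{E}$, since $C_t=Q_t\cap C_{t-1}$ the bounds obey $l_{t-1}\le l_t\le r\le u_t\le u_{t-1}$ and the widths $w_t(\s)=u_t(\s)-l_t(\s)$ are nonincreasing in $t$; an induction using this, $S\subseteq\Rreach(S)$, $\overline{S}\subseteq\Rbret(S,\overline{S})$, and $C_0\subseteq[h,\infty)$ on $S_0$ gives $\hat{S}_{t-1}\subseteq\hat{S}_t$ for all $t$. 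I would also note that $\Rsafe_\epsilon,\Rreach,\Rbret$ — hence $R_\epsilon,R_0$ — are monotone in each set argument, and that $\overline{R}_0(S_0)$ (resp.\ $\overline{R}_\epsilon(S_0)$) is a fixed point of $R_0$ (resp.\ $R_\epsilon$), since the nested sets $R_0^n(S_0)$ increase inside the finite $\mathcal{S}$. The safety part is then immediate: the definition of $S_t$ together with $l_t\le r$ and $L$-Lipschitz continuity gives $r(\s)\ge h$ for every $\s\in S_t$, and the agent only ever occupies states of $S_t$ — the state it starts iteration $t$ in lies in $\hat{S}_{t-1}\subseteq S_t$, every target $\s_t$ lies in $G_t\subseteq\hat{S}_t\subseteq S_t$, and the connecting path returned by Dijkstra lies in $S_t$ and exists by \cref{thm:ergodicity} (since $\s_{t-1}\in\hat{S}_{t-1}\subseteq\hat{S}_t$ and $\s_t\in\hat{S}_t$, so $\s_{t-1}\in\Rbret(S_t,\{\s_t\})$).

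\textbf{Right inclusion.} I would prove $\hat{S}_t\subseteq R_0(\hat{S}_{t-1})$ for all $t$: membership in $\Rreach(\hat{S}_{t-1})$ is part of the definition of $\hat{S}_t$; the witness inequality defining $S_t$ combined with $l_t\le r$ gives $S_t\subseteq\Rsafe_0(\hat{S}_{t-1})$, which by monotonicity of $\Rbret$ in its first argument promotes $\Rbret(S_t,\hat{S}_{t-1})$ to $\Rbret(\Rsafe_0(\hat{S}_{t-1}),\hat{S}_{t-1})$. Starting from $\hat{S}_0=S_0\subseteq\overline{R}_0(S_0)$, monotonicity of $R_0$ and $R_0(\overline{R}_0(S_0))=\overline{R}_0(S_0)$ then give $\hat{S}_t\subseteq\overline{R}_0(S_0)$ for every $t$, in particular for the $t_0$ produced below.

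\textbf{Left inclusion and the bound $t_0\le t^*$.} The crux is the claim: if the termination test is met at iteration $t$ ($G_t=\emptyset$ or $w_t(\s_t)=\max_{\s\in G_t}w_t(\s)\le\epsilon$), then $R_\epsilon(\hat{S}_t)\subseteq\hat{S}_{t+1}$. To prove it, take $\s^\star\in\Rsafe_\epsilon(\hat{S}_t)\setminus\hat{S}_t$ with witness $\s'\in\hat{S}_t$, so $r(\s')-\epsilon-Ld(\s^\star,\s')\ge h$; if $\s^\star\notin S_{t+1}\supseteq S_t$, then $u_t(\s')-Ld(\s^\star,\s')\ge r(\s')-Ld(\s^\star,\s')\ge h$ forces $g_t(\s')>0$, hence $\s'\in G_t$ (impossible when $G_t=\emptyset$), hence $w_t(\s')\le\epsilon$, hence $l_{t+1}(\s')\ge l_t(\s')\ge u_t(\s')-\epsilon\ge r(\s')-\epsilon$, whence $l_{t+1}(\s')-Ld(\s^\star,\s')\ge h$ and $\s^\star\in S_{t+1}$ — a contradiction. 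So $\Rsafe_\epsilon(\hat{S}_t)\subseteq S_{t+1}$, and combining with the reachability/returnability conditions as in the right-inclusion step gives $R_\epsilon(\hat{S}_t)\subseteq\hat{S}_{t+1}$. Consequently, at an iteration $t$ where the test is met and the recursion has stabilised ($\hat{S}_{t+1}=\hat{S}_t$), $\hat{S}_t$ is a fixed point of $R_\epsilon$, and since $S_0\subseteq\hat{S}_t$ this yields $\overline{R}_\epsilon(S_0)\subseteq\overline{R}_\epsilon(\hat{S}_t)=\hat{S}_t$; together with the right inclusion, $t_0:=t$ works. To locate such an iteration by $t^*$: $\hat{S}_t$ is nondecreasing inside $\overline{R}_0(S_0)$, so $\hat{S}_{t+1}\supsetneq\hat{S}_t$ for at most $|\overline{R}_0(S_0)|-1$ values of $t$; and from $w_t(\s_t)\le 2\sqrt{\beta_t}\sigma_{t-1}(\s_t)$ (by $C_t\subseteq Q_t$), monotonicity of $\beta_t$, and the bound $\sum_{i\le t}\sigma_{i-1}^2(\s_i)\le\frac{2}{\log(1+\sigma^{-2})}\gamma_t$ of \citet{Srinivas2010Gaussian}, one gets $\sum_{i\le t}w_i^2(\s_i)\le C\beta_t\gamma_t$ with $C=8/\log(1+\sigma^{-2})$, so at most $C\beta_{t^*}\gamma_{t^*}/\epsilon^2$ iterations up to $t^*$ have $w_t(\s_t)>\epsilon$. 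Since the defining inequality of $t^*$ makes $t^*$ exceed $|\overline{R}_0(S_0)|$ times that quantity, a counting argument produces some $t_0\le t^*$ that is neither a growth step ($\hat{S}_{t_0+1}\supsetneq\hat{S}_{t_0}$) nor a wide-sampling step, i.e.\ exactly the situation above.

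\textbf{Main obstacle.} The claim above is a direct computation and the counting is routine once $\hat{S}_t$ is viewed as the set recursion driving the algorithm; the genuine difficulties are, first, keeping the indices straight throughout between the optimistic expander test on $G_t$ (governed by the upper bounds $u_t$), the actual growth of $S_t$ and $\hat{S}_t$ (governed by the lower bounds $l_t$), and the reachability/returnability conditions, which are phrased relative to $\hat{S}_{t-1}$ rather than $\hat{S}_t$; and second — the part I expect to require the most care — showing that the practical termination rule cannot fire while the ergodic safe set is still about to enlarge, i.e.\ that every enlargement of $\hat{S}_t$ is heralded by a wide expander at the preceding iteration. That fact is what lets the at most $|\overline{R}_0(S_0)|$ growth steps be absorbed into the information-gain bound and is the origin of the factor $|\overline{R}_0(S_0)|$ in the definition of $t^*$.
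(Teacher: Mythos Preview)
Your safety argument and your right inclusion $\hat{S}_t\subseteq\overline{R}_0(S_0)$ are essentially the paper's own (its \cref{thm:Safety_of_trajectory} and \cref{lem:Superset}). For the left inclusion and the bound on $t_0$, however, you take a genuinely different route. The paper partitions time into windows of length $T_t$ (the smallest integer with $T_t/(\beta_{t+T_t}\gamma_{t+T_t})\ge C/\epsilon^2$) and argues: if $\hat{S}_t$ is static over an entire window then every expander has width at most $\epsilon$ at its end (this is where the information-gain bound enters), whence --- by the same computation you call the crux --- $R_\epsilon(\hat{S}_t)\subseteq\hat{S}_{t+T_t}=\hat{S}_t$, so $\overline{R}_\epsilon(S_0)\subseteq\hat{S}_t$ (\cref{lem:SExpansion}); since $\hat{S}_t$ can strictly grow at most $|\overline{R}_0(S_0)|$ times, one of the first $|\overline{R}_0(S_0)|$ windows of length $T_{t^*}$ must be static (\cref{lem:StopExpansion}), yielding the bound. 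You instead count growth steps and wide-sampling steps separately and seek a single $t_0$ that is neither. This is conceptually lighter (no auxiliary $T_t$), but your union bound $A+B<t^*$ tacitly needs the side condition $C\beta_{t^*}\gamma_{t^*}/\epsilon^2>1$; the paper's windowed argument sidesteps this because only one condition (a static window) has to be located, with the width bound following as a consequence rather than being a second event to secure by counting. One small misalignment: your ``main obstacle'' paragraph suggests you must show that every enlargement of $\hat{S}_t$ is heralded by a wide expander, but neither your own counting nor the paper's proof uses or establishes such a claim --- the factor $|\overline{R}_0(S_0)|$ in $t^*$ comes simply from bounding the number of non-static windows, not from absorbing growth steps into the information-gain budget.
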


\cref{thm:safety} states that~\cref{alg:SOMDP} performs safe and complete exploration of the state space; that is, it explores the maximum reachable safe set without visiting unsafe states. Moreover, for any desired accuracy~$\epsilon$ and probability of failure~$\delta$, the safely reachable region can be found within a finite number of observations. This bound depends on the information capacity $\gamma_t$, which in turn depends on the kernel. If the safety feature is allowed to change rapidly across states, the information capacity will be larger than if the safety feature was smooth. Intuitively, the less prior knowledge the kernel encodes, the more careful we have to be when exploring the MDP, which requires more measurements.

%\begin{figure}[t]
%\vspace{1mm}
%\centering
%\includegraphics[scale=1]{abstract_model}
%\caption{Abstract MDP model that is used to encode safety features that depend on actions. In the abstract model, actions do not directly lead to the next state, but instead lead to abstract states,~$\s_a$, which only have one available action that leads to~$f(\s,a)$. While the dynamics of the two models are equivalent, in the abstract model the safety feature is allowed to depend on these action states, without modifying the algorithm.}
%\label{fig:abstract_model}
%\end{figure}

%!TEX root = ../root.tex

\section{Experiments}
\label{sec:experiments}

In this section, we demonstrate~\cref{alg:SOMDP} on an exploration task. We consider the setting in~\cite{Moldovan2012Safe}, the exploration of the surface of Mars with a rover. The code for the experiments is available at~\mbox{\url{http://github.com/befelix/SafeMDP}}.

For space exploration, communication delays between the rover and the operator on Earth can be prohibitive. Thus, it is important that the robot can act autonomously and explore the environment without risking unsafe behavior.
For the experiment, we consider the \textit{Mars Science Laboratory} (MSL)~\cite{Lockwood2006Introduction}, a rover deployed on Mars. Due to communication delays, the MSL can travel 20 meters before it can obtain new instructions from an operator. It can climb a maximum slope of $30^\circ$~\citep[Sec. 2.1.3]{MSL2007MSL}. In our experiments we use digital terrain models of the surface of Mars from the \textit{High Resolution Imaging Science Experiment} (HiRISE), which have a resolution of one meter~\cite{McEwen2007Mars}.

As opposed to the experiments considered in~\cite{Moldovan2012Safe}, we do not have to subsample or smoothen the data in order to achieve good exploration results. This is due to the flexibility of the GP framework that considers noisy measurements. Therefore, every state in the MDP represents a $d \times d$ square area with~$d = \unit[1]{m}$, as opposed to~$d = \unit[20]{m}$ in~\cite{Moldovan2012Safe}.

At every state, the agent can take one of four actions: \textit{up}, \textit{down}, \textit{left}, and \textit{right}.
If the rover attempts to climb a slope that is steeper than~$30^\circ$, it fails and may be damaged. Otherwise it moves deterministically to the desired neighboring state.
In this setting, we define safety over state transitions by using the extension introduced in~\cref{sec:extensions}. The safety feature over the transition from~$\s$ to~$\s'$ is defined in terms of height difference between the two states,~${H(\s) - H(\s')}$. Given the maximum slope of $\alpha=30^\circ$ that the rover can climb, the safety threshold is set at a conservative~${h=-d \tan(25^\circ)}$. This encodes that it is unsafe for the robot to climb hills that are too steep. In particular, while the MDP dynamics assume that Mars is flat and every state can be reached, the safety constraint depends on the \textit{a priori} unknown heights. Therefore, under the prior belief, it is unknown which transitions are safe.

We model the height distribution,~$H(\s)$, as a GP with a Mat\'{e}rn kernel with $\nu=5/2$. Due to the limitation on the grid resolution, tuning of the hyperparameters is necessary to achieve both safety and satisfactory exploration results. With a finer resolution, more cautious hyperparameters would also be able to generalize to neighbouring states. The lengthscales are set to $\unit[14.5]{m}$ and the prior standard deviation of heights is~$\unit[10]{m}$. We assume a noise standard deviation of \unit[0.075]{m}. Since the safety feature of each state transition is a linear combination of heights, the GP model of the heights induces a GP model over the differences of heights, which we use to classify whether state transitions fulfill the safety constraint. In particular, the safety depends on the direction of travel, that is, going downhill is possible, while going uphill might be unsafe.

Following the recommendations in~\cite{Sui2015Safe}, in our experiments we use the GP confidence intervals $Q_t(\s)$ directly to determine the safe set~$S_t$. As a result, the Lipschitz constant is only used to determine expanders in $G$. Guaranteeing safe exploration with high probability over multiple steps leads to conservative behavior, as every step beyond the set that is known to be safe decreases the \lq probability budget\rq~for failure. In order to demonstrate that safety can be achieved empirically using less conservative parameters than those suggested by~\cref{thm:safety}, we fix $\beta_t$ to a constant value,~$\beta_t = 2,\, \forall t\geq 0$. This choice aims to guarantee safety per iteration rather than jointly over all the iterations. The same assumption is used in~\cite{Moldovan2012Safe}.

We compare our algorithm to several baselines. The first one considers both the safety threshold and the ergodicity requirements but neglects the expanders. In this setting, the agent samples the most uncertain safe state transaction, which corresponds to the safe Bayesian optimization framework in~\cite{Schreiter2015Safe}. We expect the exploration to be safe, but less efficient than our approach.
The second baseline considers the safety threshold, but does not consider ergodicity requirements. In this setting, we expect the rover's behavior to fulfill the safety constraint and to never attempt to climb steep slopes, but it may get stuck in states without safe actions.
The third method uses the unconstrained Bayesian optimization framework in order to explore new states, without safety requirements. In this setting, the agent tries to obtain measurements from the most uncertain state transition over the entire space, rather than restricting itself to the safe set. In this case, the rover can easily get stuck and may also incur failures by attempting to climb steep slopes. Last, we consider a random exploration strategy, which is similar to the $\epsilon$-greedy exploration strategies that are widely used in reinforcement learning.

%\begin{figure*}[ht]
%\pullText
%\centering
%\subfloat[Non-ergodic exploration.]{\includegraphics[scale=0.8]{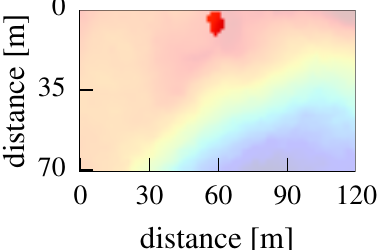} \label{fig:no_ergodic}} \hfill
%\subfloat[Unsafe exploration.]{\includegraphics[scale=0.8]{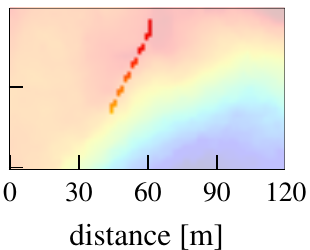} \label{fig:no_safe} } \hfill
%\subfloat[Random exploration.]{\includegraphics[scale=0.8]{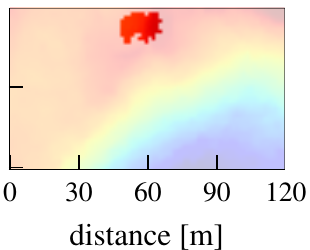} \label{fig:rand}} \hfill
%\subfloat[Exploration with~$G_t=\hat{S}_t$.]{\includegraphics[scale=0.8]{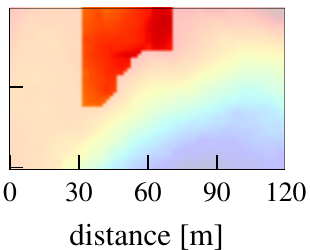} \label{fig:no_G}}
%\caption{Comparison of different exploration schemes. The background color shows the real altitude of the terrain. \cref{alg:SOMDP} in~\cref{fig:safe} converges in 160 iterations, the other baselines are run for the same number of iterations or until the first unsafe action is attempted. The saturated color indicates the region that each strategy is able to explore. The baselines get stuck in the crater in the top-left corner, of fail to explore. \pullCaption}
%\label{fig:exploration_comparison}
%\pullCaption
%\end{figure*}

\begin{figure*}
\centering
\subfloat[Non-ergodic]{\includegraphics[scale=1]{no_ergodic_exploration} \label{fig:no_ergodic}} \hfill
\subfloat[Unsafe]{\includegraphics[scale=1]{no_safe_exploration} \label{fig:no_safe} } \hfill
\subfloat[Random]{\includegraphics[scale=1]{rand_exploration} \label{fig:rand}} \hfill
\subfloat[No Expanders]{\includegraphics[scale=1]{no_G_exploration} \label{fig:no_G}} \\
\pullCaption
\subfloat[\bf{SafeMDP}]{\includegraphics[scale=1]{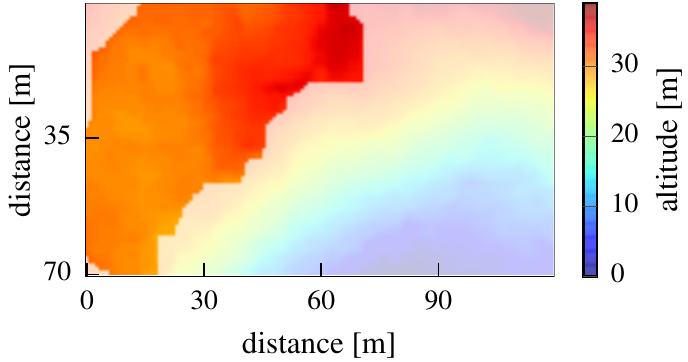}\label{fig:safe}}\hfill
\subfloat[Performance metrics. ]{
\begin{tabular}[b]{ l l c  }%
{} & $\overline{R}_{.15}(S_0)$ [\%] & $k$ at failure \\
  \hline
  \bf{SafeMDP} & $80.28 \,\%$ & -  \\
  No Expanders & $30.44 \,\%$ & -  \\
  Non-ergodic & $\phantom{0}0.86 \,\%$ & 2  \\
  Unsafe & $\phantom{0}0.23 \,\%$ & 1  \\
  Random & $\phantom{0}0.98 \,\%$ & 219 \\
  % \hline
  \vspace{0.4cm}
\end{tabular}
\label{tab:baseline_comparison}
}
\pullCaption
\caption{Comparison of different exploration schemes. The background color shows the real altitude of the terrain. All algorithms are run for 525 iterations, or until the first unsafe action is attempted. The saturated color indicates the region that each strategy is able to explore. The baselines get stuck in the crater in the bottom-right corner or fail to explore, while~\cref{alg:SOMDP} manages to safely explore the unknown environment. See the statistics in~\cref{tab:baseline_comparison}.}
\label{fig:exploration_comparison}
\pullText
\end{figure*}

We compare these baselines over an 120 by 70 meters area at $-30.6^\circ$ latitude and $ 202.2^\circ$ longitude. We set the accuracy $\epsilon=\sigma_n\beta$. The resulting exploration behaviors can be seen in~\cref{fig:exploration_comparison}. The rover starts in the center-top part of the plot, a relatively planar area. In the top-right corner there is a hill that the rover cannot climb, while in the bottom-right corner there is a crater that, once entered, the rover cannot leave. The safe behavior that we expect is to explore the planar area, without moving into the crater or attempting to climb the hill.
We run all algorithms for $525$~iterations or until the first unsafe action is attempted. It can be seen in~\cref{fig:safe} that our method explores the safe area that surrounds the crater, without attempting to move inside.
While some state-action pairs closer to the crater are also safe, the GP model would require more data to classify them as safe with the necessary confidence. In contrast, the baselines perform significantly worse. The baseline that does not ensure the ability to return to the safe set (non-ergodic) can be seen in~\cref{fig:no_ergodic}. It does not explore the area, because it quickly reaches a state without a safe path to the next target sample. Our approach avoids these situations explicitly. The unsafe exploration baseline in~\cref{fig:no_safe} considers ergodicity, but concludes that every state is reachable according to the MDP model. Consequently, it follows a path that crosses the boundary of the crater and eventually evaluates an unsafe action. Overall, it is not enough to consider only ergodicity or only safety, in order to solve the safe exploration problem. The random exploration in~\cref{fig:rand} attempts an unsafe action after some exploration. In contrast, \cref{alg:SOMDP} manages to safely explore a large part of the unknown environment. Running the algorithm without considering expanders leads to the behavior in~\cref{fig:no_G}, which is safe, but only manages to explore a small subset of the safely reachable area within the same number of iterations in which~\cref{alg:SOMDP} explores over $80\%$ of it. The results are summarized in Table~\ref{tab:baseline_comparison}.

%!TEX root = ../root.tex

\section{Conclusion}
\label{sec:conclusion}

We presented \textsc{SafeMDP}, an algorithm to safely explore \textit{a priori} unknown environments. We used a Gaussian process to model the safety constraints, which allows the algorithm to reason about the safety of state-action pairs before visiting them. An important aspect of the algorithm is that it considers the transition dynamics of the MDP in order to ensure that there is a safe return route before visiting states. We proved that the algorithm is capable of exploring the full safely reachable region with few measurements, and demonstrated its practicality and performance in experiments.

\textbf{Acknowledgement.} This research was partially supported by the Max Planck ETH Center for Learning Systems and SNSF grant {200020\_159557}.

\clearpage
\bibliographystyle{plainnat}
{\small
\bibliography{root.bib}}

\appendix
%!TEX root = ../root.tex

\section{Preliminary lemmas}

\begin{restatable}{lemma}{monotonicity}
\label{lem:monotonicity}
\,$\forall \s \in \mathcal{S},\,u_{t+1}(\s)\leq u_t(\s),\,l_{t+1}(\s)\geq l_t(\s),\, w_{t+1}(\s)\leq w_t(\s)$.
\end{restatable}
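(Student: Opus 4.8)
The plan is to exploit the nested structure of the confidence sets $C_t(\s)$ directly. Recall from the algorithm description that $C_t(\s) = Q_t(\s) \cap C_{t-1}(\s)$, with $C_0(\s)$ equal to $[h,\infty]$ for $\s \in S_0$ and $\mathbb{R}$ otherwise. First I would observe that this recursion immediately gives $C_{t+1}(\s) = Q_{t+1}(\s) \cap C_t(\s) \subseteq C_t(\s)$ for every $\s \in \mathcal{S}$ and every $t \geq 0$; that is, the sequence of confidence intervals is nonincreasing with respect to set inclusion.

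Next I would translate this inclusion into the claimed inequalities on the endpoints. Since $l_t(\s) = \min C_t(\s)$ and $u_t(\s) = \max C_t(\s)$, and $C_{t+1}(\s) \subseteq C_t(\s)$, the minimum over the smaller set can only be larger (or equal) and the maximum can only be smaller (or equal): $l_{t+1}(\s) = \min C_{t+1}(\s) \geq \min C_t(\s) = l_t(\s)$ and $u_{t+1}(\s) = \max C_{t+1}(\s) \leq \max C_t(\s) = u_t(\s)$. Subtracting, $w_{t+1}(\s) = u_{t+1}(\s) - l_{t+1}(\s) \leq u_t(\s) - l_t(\s) = w_t(\s)$. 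Since $\s$ was arbitrary, all three statements hold for all $\s \in \mathcal{S}$.

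There is essentially no hard step here; the only thing to be slightly careful about is that the intervals $C_t(\s)$ are nonempty and that $\min$/$\max$ are well defined (which is ensured, with high probability, by \cref{thm:beta}, since $r(\s)$ lies in every $C_t(\s)$; alternatively one may work with $\inf$/$\sup$ and the bounds go through verbatim). The argument is purely set-theoretic and does not use the Lipschitz assumption, the MDP structure, or the particular choice of $\beta_t$, so it is robust to those modeling details.
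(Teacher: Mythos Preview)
Your argument is correct and is exactly the approach the paper takes: the lemma is stated to follow directly from the definitions of $u_t$, $l_t$, $w_t$, and $C_t$, and you have spelled out precisely that unpacking via $C_{t+1}(\s)\subseteq C_t(\s)$. Your added remark about nonemptiness of $C_t(\s)$ is a nice touch but is not needed for the paper's purposes.
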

\begin{proof}
This lemma follows directly from the definitions of  $u_t(\s)$, $l_t(\s)$, $w_t(\s)$ and $C_t(\s)$.
\end{proof}

\begin{lemma}
\label{lem:inclusion_in_R_ret_prelim}
$\forall n \geq 1$, $\s \in \Rret_n(\overline{S},S) \implies \s \in S\cup \overline{S}$.
\end{lemma}
\begin{proof}
Proof by induction. Consider $n=1$, then $\s \in \Rret(\overline{S},S)\implies \s \in S\cup \overline{S}$ by definition. For the induction step, assume $\s \in \Rret_{n-1}(\overline{S},S)\implies \s \in S\cup \overline{S}$. Now consider $\s \in \Rret_{n}(\overline{S},S)$. We know that
\begin{align*}
\Rret_n(\overline{S},S) & = \Rret(\overline{S},\Rret_{n-1}(\overline{S},S)),\\
& = \Rret_{n-1}(\overline{S},S)\cup \{\s \in \overline{S} \,|\, \exists a \in \mathcal{A}(\s) \colon f(\s,a) \in \Rret_{n-1}(\overline{S},S) \}.
\end{align*}
Therefore, since $\s \in \Rret_{n-1}(\overline{S},S) \implies \s \in S\cup\overline{S}$ and $\overline{S} \subseteq \overline{S} \cup S$, it follows that $\s \in S\cup \overline{S}$ and the induction step is complete.
\end{proof}

\begin{lemma}
\label{lem:inclusion_in_R_ret}
\,$\forall n \geq 1$, $\s \in \Rret_n(\overline{S},S) \iff \exists k, 0\leq k \leq n$ and $(a_1,\ldots,a_k)$, a sequence of $k$ actions, that induces $(\s_0,\s_1,\ldots,\s_k)$ starting at $\s_0=\s$, such that $\s_i \in \overline{S},\,\forall i=0,\ldots,k-1$ and $\s_k \in S$.
\end{lemma}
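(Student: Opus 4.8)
The plan is to prove the biconditional by induction on $n$, using the recursive definition $\Rret_n(\overline{S},S) = \Rret(\overline{S}, \Rret_{n-1}(\overline{S},S))$ together with the one‑step operator in~\cref{eq:Rret}. It is convenient to read the right‑hand side of the claim as ``there is a path of length at most $n$ starting at $\s$ that stays inside $\overline{S}$ until its final vertex, which lies in $S$'', fixing the convention that a length‑$k$ path has vertices $\s_0,\dots,\s_k$ and that the membership constraint $\s_i\in\overline{S}$ applies to $i=0,\dots,k-1$; in particular $k=0$ has a vacuous constraint and simply asserts $\s\in S$.

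For the base case $n=1$, I would unfold~\cref{eq:Rret} to get $\Rret_1(\overline{S},S) = S \cup \{\s\in\overline{S} \mid \exists a\in\mathcal{A}(\s)\colon f(\s,a)\in S\}$, observe that the first set is exactly the collection of $k=0$ witnesses and the second is exactly the collection of $k=1$ witnesses (using the single action $a_1=a$), and conclude the equivalence. For the inductive step I would write, from the definition of $\Rret_n$ and of $\Rret$,
\[
\Rret_n(\overline{S},S) = \Rret_{n-1}(\overline{S},S)\ \cup\ \{\s\in\overline{S} \mid \exists a\in\mathcal{A}(\s)\colon f(\s,a)\in\Rret_{n-1}(\overline{S},S)\},
\]
and then handle each direction against this union. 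For $(\Leftarrow)$: a witnessing path of length $k\le n-1$ puts $\s$ in the first set by the inductive hypothesis; a witnessing path of length $k=n$ forces $\s=\s_0\in\overline{S}$, and truncating to $(\s_1,\dots,\s_n)$ gives a length‑$(n-1)$ witness for $\s_1=f(\s,a_1)$, so $\s$ lies in the second set. For $(\Rightarrow)$: if $\s\in\Rret_{n-1}(\overline{S},S)$ the inductive hypothesis supplies a path of length $\le n-1\le n$; if $\s\in\overline{S}$ with $f(\s,a_1)\in\Rret_{n-1}(\overline{S},S)$, the inductive hypothesis gives a length‑$k'$ witness from $f(\s,a_1)$ with $k'\le n-1$, and prepending $\s$ via $a_1$ yields a length‑$(k'+1)\le n$ witness from $\s$ (the new initial vertex $\s$ is in $\overline{S}$; the remaining intermediate vertices inherit membership from the sub‑path, and the terminal vertex is still in $S$).

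The only real work is index bookkeeping for the prepend/truncate operations and checking the boundary case $k=0$ (equivalently $k'=0$), where the sub‑path is a single vertex already in $S$ and the argument degenerates cleanly; this is routine once the path‑indexing convention above is pinned down. No facts beyond the definitions are needed here — in particular \cref{lem:inclusion_in_R_ret_prelim} is not required, although the conclusion $\s\in S\cup\overline{S}$ it provides is consistent with the path produced.
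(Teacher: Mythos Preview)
Your proof is correct and follows essentially the same approach as the paper: both arguments hinge on the one-step unfolding $\Rret_n(\overline{S},S)=\Rret_{n-1}(\overline{S},S)\cup\{\s\in\overline{S}\mid\exists a\colon f(\s,a)\in\Rret_{n-1}(\overline{S},S)\}$ and then pass between membership in $\Rret_n$ and the existence of a bounded-length path. The only difference is presentational: you package both directions as a formal induction on $n$ with explicit prepend/truncate bookkeeping, whereas the paper argues $(\Rightarrow)$ by iteratively unrolling the recursion and $(\Leftarrow)$ by backward case analysis along the path; your version is arguably tidier and in particular makes the requirement $\s_i\in\overline{S}$ (rather than merely $\overline{S}\cup S$) for $i<k$ immediate.
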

\begin{proof}
$(\implies)$. $\s \in \Rret_n(\overline{S},S)$ means that either $\s \in \Rret_{n-1}(\overline{S},S)$ or $\exists a\in \mathcal{A}(\s)\colon f(\s,a) \in \Rret_{n-1}(\overline{S},S)$. Therefore, we can reach a state in $ \Rret_{n-1}(\overline{S},S)$ taking at most one action. Repeating this procedure $i$ times, the system reaches a state in $ \Rret_{n-i}(\overline{S},S)$ with at most $i$ actions. In particular, if we choose $i=n$, we prove the agent reaches $S$ with at most $n$ actions. Therefore there is a sequence of actions of length $k$, with $0\leq k \leq n$, inducing a state trajectory such that: $\s_0=\s$, $\s_i  \in \Rret_{n-i}(\overline{S},S)\subseteq \overline{S}\cup S$ for every $i=0,\ldots,k-1$ and $\s_k \in S$. \\
$(\impliedby)$. Consider $k=0$. This means that $\s \in S \subseteq \Rret_n(\overline{S},S)$. In case $k=1$ we have that $\s_0 \in \overline{S}$ and that $f(\s_0,a_1) \in S$. Therefore $\s \in \Rret(\overline{S},S)\subseteq \Rret_n(\overline{S},S)$. For $k\geq 2$ we know $\s_{k-1} \in \overline{S}$ and $f(\s_{k-1},a_{k}) \in S \implies \s_{k-1}  \in \Rret(\overline{S},S)$. Similarly $\s_{k-2} \in \overline{S}$ and $f(\s_{k-2},a_{k-1})=\s_{k-1} \in \Rret(\overline{S},S) \implies \s_{k-2} \in \Rret_2(\overline{S},S)$. For any $0\leq k \leq n$ we can apply this reasoning $k$ times and prove that $\s \in \Rret_k(\overline{S},S)\subseteq \Rret_n(\overline{S},S)$.
\end{proof}

\todo{This can probably be proved in a nicer way.}

\begin{lemma} \label{R_N_equal_R_bar}
$\forall \overline{S},S \subseteq \mathcal{S},\, \forall N \geq |\mathcal{S}|$, $\Rret_N(\overline{S},S)=\Rret_{N+1}(\overline{S},S)=\Rbret(\overline{S},S)$
\end{lemma}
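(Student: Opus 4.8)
The plan is to treat $n \mapsto \Rret_n(\overline S, S)$ as a non-decreasing chain of subsets of the finite set $\mathcal S$, conclude that it stabilizes, and pin down that it is already constant from index $|\mathcal S|$ onward.

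First I would unroll the recursion to obtain monotonicity in $n$: for every $n \ge 1$,
$\Rret_{n+1}(\overline S, S) = \Rret(\overline S, \Rret_n(\overline S, S)) = \Rret_n(\overline S, S) \cup \{\s \in \overline S \mid \exists a \in \mathcal A(\s),\ f(\s,a) \in \Rret_n(\overline S, S)\}$,
so $\Rret_n(\overline S, S) \subseteq \Rret_{n+1}(\overline S, S)$; and by \cref{lem:inclusion_in_R_ret_prelim} every $\Rret_n(\overline S, S)$ is contained in $S \cup \overline S \subseteq \mathcal S$. Next I would note that stabilization propagates: writing $T \mapsto \Rret(\overline S, T)$ for the (index-independent) map that generates the sequence, $\Rret_{m+1}(\overline S, S) = \Rret(\overline S, \Rret_m(\overline S, S))$, so if $\Rret_n(\overline S, S) = \Rret_{n+1}(\overline S, S)$ for some $n$ then a trivial induction gives $\Rret_m(\overline S, S) = \Rret_n(\overline S, S)$ for all $m \ge n$.

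The core step is a counting argument showing the chain is constant from index $|\mathcal S|$. Look at the $|\mathcal S| + 1$ sets $\Rret_1(\overline S, S) \subseteq \dots \subseteq \Rret_{|\mathcal S|+1}(\overline S, S)$. If all $|\mathcal S|$ inclusions were strict, their cardinalities would be $|\mathcal S| + 1$ strictly increasing integers inside $\{0, 1, \dots, |\mathcal S|\}$, which forces $|\Rret_1(\overline S, S)| = 0$; but $S \subseteq \Rret_1(\overline S, S)$ (immediate from $\Rret_1(\overline S, S) = \Rret(\overline S, S) = S \cup \{\dots\}$), so $S = \emptyset$, and then $\Rret_1(\overline S, S) = \Rret(\overline S, \emptyset) = \emptyset = \Rret_2(\overline S, S)$, contradicting strictness. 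Hence $\Rret_{n^*}(\overline S, S) = \Rret_{n^*+1}(\overline S, S)$ for some $n^* \le |\mathcal S|$, and by the propagation step the sequence is constant for all indices $\ge n^*$; in particular $\Rret_N(\overline S, S) = \Rret_{N+1}(\overline S, S)$ for every $N \ge |\mathcal S|$. Finally, since $\Rbret(\overline S, S) = \lim_{n\to\infty} \Rret_n(\overline S, S)$ and the sequence is non-decreasing and equal to $\Rret_{n^*}(\overline S, S)$ for all indices $\ge n^*$, its limit is $\Rret_{n^*}(\overline S, S)$, which coincides with $\Rret_N(\overline S, S)$, giving the claimed triple equality.

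I expect the only subtle point to be the counting argument, specifically excluding a degenerate chain that jumps out of $\emptyset$; this is exactly the spot where the case $S = \emptyset$ must be dispatched, which is immediate since then $\Rret_n(\overline S, S) = \emptyset$ for all $n$. Everything else is bookkeeping with the set operators and the facts already recorded in \cref{lem:inclusion_in_R_ret_prelim,lem:inclusion_in_R_ret}.
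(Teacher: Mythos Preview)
Your argument is correct, but it follows a different route from the paper's. The paper invokes \cref{lem:inclusion_in_R_ret} directly: since $\s \in \Rret_N(\overline S,S)$ is equivalent to the existence of a path of length at most $N$ from $\s$ through $\overline S$ into $S$, and since any such path in a state space of size $|\mathcal S|$ can be shortened (by excising repeated states) to one of length at most $|\mathcal S|$, membership in $\Rret_N(\overline S,S)$ for $N\ge|\mathcal S|$ coincides with membership in $\Rret_{|\mathcal S|}(\overline S,S)$, and hence with $\Rbret(\overline S,S)$. Your proof instead treats the recursion as a monotone operator on subsets of the finite set $\mathcal S$ and uses a pigeonhole count on the chain $\Rret_1\subseteq\cdots\subseteq\Rret_{|\mathcal S|+1}$ to force stabilization by index $|\mathcal S|$, handling the degenerate $S=\emptyset$ case separately. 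Your approach is more self-contained---it needs only the trivial monotonicity of the operator and the finiteness of $\mathcal S$, and does not rely on the path characterization of \cref{lem:inclusion_in_R_ret}---whereas the paper's version is shorter and more geometric once that characterization is in hand. Both yield the same conclusion with the same quantitative bound $N\ge|\mathcal S|$.
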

\begin{proof}
This is a direct consequence of~\cref{lem:inclusion_in_R_ret}. In fact, \cref{lem:inclusion_in_R_ret} states that $\s$ belongs to $\Rret_N(\overline{S},S)$ if and only if there is a path of length at most $N$ starting from $\s$ contained in $\overline{S}$ that drives the system to a state in $S$. Since we are dealing with a finite MDP, there are $|\mathcal{S}|$ different states. Therefore, if such a path exists it cannot be longer than $|\mathcal{S}|$.
\end{proof}
\todo{This can probably be a bit more rigorous, i.e. basically saying that $\s$ cannot appear twice in the same path from a to b without there existing a shorter path.}

\begin{lemma} \label{lem:Rbar_ret_subset}
Given $S \subseteq R \subseteq \mathcal{S}$ and $\overline{S} \subseteq \overline{R} \subseteq \mathcal{S}$, it holds that $\Rbret(\overline{S},S)\subseteq \Rbret(\overline{R},R)$.
\end{lemma}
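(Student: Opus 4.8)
The plan is to reduce the claim about the infinite iterate $\Rbret$ to a monotonicity property of the one-step operator $\Rret$, bootstrap it through the finite iterates $\Rret_n$ by induction, and then pass to the limit (which, by \cref{R_N_equal_R_bar}, is attained after finitely many steps).

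First I would record the basic monotonicity of the one-step return operator: for any sets $A \subseteq A'$ and $B \subseteq B'$, we have $\Rret(A,B) \subseteq \Rret(A',B')$. This is immediate from the definition in \cref{eq:Rret}: if $\s \in \Rret(A,B)$ then either $\s \in B \subseteq B'$, or $\s \in A \subseteq A'$ and there is an action $a \in \mathcal{A}(\s)$ with $f(\s,a) \in B \subseteq B'$; in either case $\s \in \Rret(A',B')$. Note in particular that the operator is monotone in each of its two arguments separately, which is all we need since in our setting both the ``through'' set ($\overline{S} \subseteq \overline{R}$) and the ``target'' set ($S \subseteq R$) enlarge simultaneously.

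Next I would prove by induction on $n \geq 1$ that $\Rret_n(\overline{S},S) \subseteq \Rret_n(\overline{R},R)$. The base case $n=1$ is exactly the monotonicity statement above applied with $A = \overline{S}$, $B = S$. For the induction step, assume $\Rret_{n-1}(\overline{S},S) \subseteq \Rret_{n-1}(\overline{R},R)$. Using the recursion $\Rret_n(\overline{S},S) = \Rret(\overline{S}, \Rret_{n-1}(\overline{S},S))$ together with $\overline{S} \subseteq \overline{R}$ and the induction hypothesis, monotonicity of $\Rret$ in both arguments yields $\Rret_n(\overline{S},S) \subseteq \Rret(\overline{R}, \Rret_{n-1}(\overline{R},R)) = \Rret_n(\overline{R},R)$, completing the induction. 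Taking $N \geq |\mathcal{S}|$ and invoking \cref{R_N_equal_R_bar}, which gives $\Rret_N(\overline{S},S) = \Rbret(\overline{S},S)$ and $\Rret_N(\overline{R},R) = \Rbret(\overline{R},R)$, the inclusion $\Rret_N(\overline{S},S) \subseteq \Rret_N(\overline{R},R)$ is precisely the desired conclusion.

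I do not expect a genuine obstacle here: the only point requiring care is keeping the two arguments of $\Rret$ straight (the first is the set of states one is allowed to traverse, the second is the set one wants to return to) and observing that monotonicity in each argument is enough. An alternative, induction-free route is to apply the path characterization of \cref{lem:inclusion_in_R_ret} directly: any action sequence witnessing $\s \in \Rret_N(\overline{S},S)$ — with intermediate states in $\overline{S}$ and final state in $S$ — also witnesses $\s \in \Rret_N(\overline{R},R)$, since $\overline{S} \subseteq \overline{R}$ and $S \subseteq R$; then \cref{R_N_equal_R_bar} finishes as before.
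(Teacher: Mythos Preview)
Your proposal is correct. Your primary route---monotonicity of the one-step operator $\Rret$ in both arguments, propagated through the finite iterates $\Rret_n$ by induction, and then identified with $\Rbret$ via \cref{R_N_equal_R_bar}---is sound in every step. The paper instead takes exactly the ``alternative, induction-free route'' you sketch at the end: it invokes the path characterization of \cref{lem:inclusion_in_R_ret} together with \cref{R_N_equal_R_bar} to extract a witnessing action sequence with intermediate states in $\overline{S}\subseteq\overline{R}$ and terminal state in $S\subseteq R$, and then feeds that same sequence back into the $(\impliedby)$ direction of \cref{lem:inclusion_in_R_ret}. Your inductive argument is slightly more self-contained (it does not need the path lemma at all, only the recursive definition of $\Rret_n$ and \cref{R_N_equal_R_bar}), while the paper's argument is shorter once those auxiliary lemmas are in hand; substantively there is nothing to choose between them.
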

\begin{proof}
Let $\s \in \Rbret(\overline{S},S)$. It follows from~\cref{lem:inclusion_in_R_ret,R_N_equal_R_bar} that there exists a sequence of actions, $(a_1,\ldots,a_k)$, with $0\leq k \leq |\mathcal{S}|$, that induces a state trajectory, $(\s_0,\s_1,\ldots,\s_k)$, starting at $\s_0=\s$ with $\s_i \in \overline{S}\subseteq \overline{R},\,\forall i=1,\ldots,k-1$ and $s_k \in S \subseteq R$. Using the ($\impliedby$) direction of~\cref{lem:inclusion_in_R_ret} and~\cref{R_N_equal_R_bar}, we conclude that $\s\in \Rbret(\overline{R},R)$.
\end{proof}

\begin{lemma} \label{lem:R_reach}
$S\subseteq R \implies \Rreach(S)\subseteq \Rreach(R)$.
\end{lemma}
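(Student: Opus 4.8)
The statement to prove is \cref{lem:R_reach}: $S \subseteq R \implies \Rreach(S) \subseteq \Rreach(R)$.

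The plan is to unfold the definition of $\Rreach$ in~\cref{eq:Rreach} and argue membership directly. Recall that
$\Rreach(S) = S \cup \{\s \in \mathcal{S} \,\vert\, \exists \s'\in S, a \in \mathcal{A}(\s')\colon \s=f(\s',a)\}$.
So let $\s \in \Rreach(S)$; I would split into the two cases coming from the union. In the first case, $\s \in S$; since $S \subseteq R$ by hypothesis, $\s \in R \subseteq \Rreach(R)$, using that $R$ is always contained in $\Rreach(R)$ by definition. In the second case, there exist $\s' \in S$ and $a \in \mathcal{A}(\s')$ with $\s = f(\s', a)$; since $S \subseteq R$, we have $\s' \in R$, and the same $a$ and transition witness that $\s \in \{\s \in \mathcal{S} \,\vert\, \exists \s'\in R, a \in \mathcal{A}(\s')\colon \s=f(\s',a)\} \subseteq \Rreach(R)$. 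In both cases $\s \in \Rreach(R)$, which proves the inclusion.

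There is essentially no obstacle here: the result is pure monotonicity of a set operator that is built from a union and an existential quantifier over a fixed relation (the transition graph), and both constructions are monotone in the seed set. No induction is needed, unlike the analogous \cref{lem:Rbar_ret_subset} for the returnability operator, because $\Rreach$ is a single-step operator. The only thing to be careful about is to handle the trivial $\s \in S$ branch explicitly rather than forgetting that $\Rreach$ includes its argument, and to note that the action set $\mathcal{A}(\s')$ depends only on $\s'$, so the witness $(\s', a)$ transfers verbatim from the $S$-instance to the $R$-instance. I expect the author's proof to be one or two lines along exactly these lines.
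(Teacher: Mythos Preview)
Your proof is correct and matches the paper's argument essentially verbatim: pick $\s \in \Rreach(S)$, split into the two cases of the union in~\cref{eq:Rreach}, and use $S \subseteq R$ to transfer membership or the witness $(\s',a)$ to $\Rreach(R)$.
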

\begin{proof}
Consider $\s \in \Rreach(S)$. Then either $\s \in S\subseteq R$ or $\exists \hat{s} \in S \subseteq R,\,\hat{a} \in \mathcal{A}(\hat{\s})\colon \s=f(\hat{\s},\hat{a})$, by definition. This implies that $\s \in \Rreach(R)$.
\end{proof}

%%%CURRENT
\begin{lemma} \label{lem:non_decresing_S}
For any $t\geq 1$, $S_0\subseteq S_t \subseteq S_{t+1}$ and $\hat{S}_0\subseteq \hat{S}_t \subseteq \hat{S}_{t+1}$
\end{lemma}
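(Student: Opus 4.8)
The statement bundles two monotonicity claims ($S_t$ grows, $\hat{S}_t$ grows) that feed into one another, so the natural approach is a single simultaneous induction on $t$. I would first fix the $t=0$ conventions: $\hat{S}_0 = S_0$ (this is precisely the initialization requirement that $S_0$ already satisfies the reachability and returnability properties) and $l_0(\s) = \min C_0(\s) = h$ for $\s \in S_0$. The three ingredients driving the induction are: monotonicity of the GP lower bound, $l_{t+1}(\s) \geq l_t(\s)$ (\cref{lem:monotonicity}, which already applies at $t=0$ since $C_{t+1} = Q_{t+1}\cap C_t \subseteq C_t$); monotonicity of the one-step reachability operator (\cref{lem:R_reach}); and monotonicity of the returnability operator in both its arguments (\cref{lem:Rbar_ret_subset}). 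Note it suffices to prove $S_{t-1}\subseteq S_t$ and $\hat{S}_{t-1}\subseteq \hat{S}_t$ for every $t\geq 1$; the bounds $S_0\subseteq S_t$ and $\hat{S}_0\subseteq \hat{S}_t$ then follow by transitivity.

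\textbf{Base case ($t=1$).} For $\s \in S_0 = \hat{S}_0$, use $\s' = \s$ as witness in \cref{def:safety}: since $d(\s,\s)=0$ and $l_1(\s)\geq l_0(\s)=h$, we get $l_1(\s)-Ld(\s,\s)\geq h$, so $\s\in S_1$; hence $S_0\subseteq S_1$. For $\hat{S}_0\subseteq \hat{S}_1$, take $\s\in \hat{S}_0 = S_0$: then $\s\in S_1$ (just shown); $\s\in \Rreach(\hat{S}_0)$ because a set is always contained in its one-step reachable set (\cref{eq:Rreach}); and $\s\in \Rbret(S_1,\hat{S}_0)$ because the target set is always contained in the returnable set ($\overline{S}\subseteq \Rret(S,\overline{S})$ by \cref{eq:Rret}, and this persists under iteration). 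Hence $\s\in \hat{S}_1$.

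\textbf{Induction step.} Assume $S_{t-1}\subseteq S_t$ and $\hat{S}_{t-1}\subseteq \hat{S}_t$. First prove $S_t\subseteq S_{t+1}$: if $\s\in S_t$ with witness $\s'\in \hat{S}_{t-1}$ satisfying $l_t(\s')-Ld(\s,\s')\geq h$, then $\s'\in \hat{S}_t$ by hypothesis and $l_{t+1}(\s')\geq l_t(\s')$ by \cref{lem:monotonicity}, so $\s'$ still witnesses $\s\in S_{t+1}$. Now prove $\hat{S}_t\subseteq \hat{S}_{t+1}$, using the inclusion just obtained: if $\s\in \hat{S}_t$, then $\s\in S_t\subseteq S_{t+1}$; from $\hat{S}_{t-1}\subseteq \hat{S}_t$ and \cref{lem:R_reach} we get $\s\in \Rreach(\hat{S}_{t-1})\subseteq \Rreach(\hat{S}_t)$; and from $\hat{S}_{t-1}\subseteq \hat{S}_t$, $S_t\subseteq S_{t+1}$, and \cref{lem:Rbar_ret_subset} we get $\s\in \Rbret(S_t,\hat{S}_{t-1})\subseteq \Rbret(S_{t+1},\hat{S}_t)$. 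Hence $\s\in \hat{S}_{t+1}$, closing the induction.

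\textbf{Main difficulty.} There is no deep obstacle; the argument is essentially bookkeeping. The one point that needs care is the \emph{ordering} inside the induction step: establishing $\hat{S}_t\subseteq \hat{S}_{t+1}$ genuinely requires $S_t\subseteq S_{t+1}$ as an intermediate fact, because $\Rbret(S_{t+1},\hat{S}_t)$ depends on the ``travel-through'' set $S_{t+1}$, so the two inclusions cannot be decoupled and must be proved in this sequence. The only other subtlety is pinning down the $t=0$ conventions (that $\hat{S}_0 = S_0$ and that \cref{lem:monotonicity} already holds at $t=0$), which the base case takes care of.
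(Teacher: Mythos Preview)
Your proposal is correct and follows essentially the same route as the paper: a simultaneous induction that first establishes $S_t\subseteq S_{t+1}$ via \cref{lem:monotonicity} and the witness $\s'\in\hat{S}_{t-1}\subseteq\hat{S}_t$, and then lifts this to $\hat{S}_t\subseteq\hat{S}_{t+1}$ using \cref{lem:R_reach} and \cref{lem:Rbar_ret_subset}. Your explicit remark about the ordering (proving the $S$-inclusion before the $\hat{S}$-inclusion) and the $t=0$ conventions is exactly the bookkeeping the paper's proof relies on implicitly.
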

\begin{proof}
Proof by induction. Consider $\s \in S_0$, $S_0 = \hat{S}_0$ by initialization. We known that
\begin{equation*}
l_1(\s) - L d(\s,\s) = l_1(\s) \geq l_0(\s) \geq h,
\end{equation*}
where the last inequality follows from \cref{lem:monotonicity}. This implies that $\s \in S_1$ or, equivalently, that $S_0\subseteq S_1$. Furthermore, we know by initialization that $\s \in \Rreach(\hat{S}_0)$. Moreover, we can say that $\s \in \Rbret(S_1,\hat{S_0})$, since~$S_1 \supseteq S_0 = \hat{S}_0$. We can conclude that $\s \in \hat{S}_1$. For the induction step assume that $S_{t-1} \subseteq S_t$ and $\hat{S}_{t-1}\subseteq \hat{S}_t$. Let $\s \in S_t$. Then,
\begin{equation*}
\exists \s' \in \hat{S}_{t-1} \subseteq \hat{S}_t: l_t(\s')-L d(\s,\s') \geq h.
\end{equation*}
Furthermore, it follows from~\cref{lem:monotonicity} that $l_{t+1}(\s')-L d(\s, \s') \geq l_t(\s')-L d(\s, \s')$. This implies that $l_{t+1}(\s')-L d(\s, \s') \geq h$. Thus $\s \in S_{t+1}$. Now consider $\s \in \hat{S}_t$. We known that
\begin{equation*}
\s \in \Rreach(\hat{S}_{t-1})\subseteq \Rreach(\hat{S}_{t}) \tag*{by \cref{lem:R_reach}}
\end{equation*}
We also know that $\s \in \Rbret(S_t,\hat{S}_{t-1}) $. Since we just proved that $S_t \subseteq S_{t+1}$ and  we assumed $\hat{S}_{t-1}\subseteq \hat{S}_t$ for the induction step, \cref{lem:Rbar_ret_subset} allows us to say that $\s \in \Rbret(S_{t+1},\hat{S}_{t})$. All together this allows us to complete the induction step by saying $\s \in \hat{S}_{t+1}$.
\end{proof}

\begin{lemma}\label{R_safe}
$S\subseteq R \implies \Rsafe_{\epsilon}(S)\subseteq \Rsafe_{\epsilon}(R)$.
\end{lemma}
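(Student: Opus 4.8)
The plan is to mimic the proof of \cref{lem:R_reach} almost verbatim, since $\Rsafe_\epsilon$ is defined (in \cref{eq:safe_set}) as the union of the base set $S$ with a set carved out by an existential quantifier ranging over members of $S$; enlarging the base set can only make that existential easier to satisfy, so the whole operator is monotone with respect to set inclusion.

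Concretely, I would take an arbitrary $\s \in \Rsafe_\epsilon(S)$ and split into the two cases coming directly from the definition. In the first case $\s \in S$; since $S \subseteq R$ we immediately get $\s \in R \subseteq \Rsafe_\epsilon(R)$. In the second case there is a witness $\s' \in S$ with $r(\s') - \epsilon - L d(\s,\s') \geq h$; because $S \subseteq R$ this same $\s'$ lies in $R$, and it witnesses membership of $\s$ in the second component of $\Rsafe_\epsilon(R)$. In either case $\s \in \Rsafe_\epsilon(R)$, which establishes the claimed inclusion.

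I expect no real obstacle here: the statement is a one-line consequence of the definition, included only so that subsequent monotonicity arguments (combining it with \cref{lem:R_reach} and \cref{lem:Rbar_ret_subset} to conclude that the combined operator $R_\epsilon$ from \cref{eq:R_eps_1} is monotone, and hence that $\overline{R}_\epsilon$ is well defined as a limit) can cite it cleanly. The only point worth noting is that the witness $\s'$ need not be modified when passing from $S$ to $R$, so that no Lipschitz continuity, kernel, or GP properties of $r(\cdot)$ are invoked anywhere in the argument.
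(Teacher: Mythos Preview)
Your proposal is correct and follows essentially the same approach as the paper: pass a witness $\s'\in S$ to $R$ via the inclusion $S\subseteq R$. If anything, your case split is slightly more careful than the paper's one-line argument, which jumps straight to the existential witness without separately treating the case $\s\in S$.
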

\begin{proof}
Consider $\s \in \Rsafe_{\epsilon}(S)$, we can say that:
\begin{equation}
\exists \s' \in S\subseteq R: r(\z')-\epsilon-L d(\z,\z') \geq h
\end{equation}
This means that $\s \in \Rsafe_{\epsilon}(R) $
\end{proof}

\begin{restatable}{lemma}{OrderPreservingR}
\label{lem:orderPreservingR}
Given two sets $S,R \subseteq \mathcal{S}$ such that $S\subseteq R$, it holds that: \,${R_\epsilon (S)\subseteq R_\epsilon (R)}$.
\end{restatable}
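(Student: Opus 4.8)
The statement $R_\epsilon(S) \subseteq R_\epsilon(R)$ for $S \subseteq R$ should follow by unfolding the definition of $R_\epsilon$ in \cref{eq:R_eps_1} and showing monotonicity of each of the three constituent set operators, then intersecting. Recall $R_\epsilon(S) = \Rsafe_\epsilon(S) \cap \Rreach(S) \cap \Rbret(\Rsafe_\epsilon(S), S)$. The plan is to handle each factor separately and then combine.

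First I would take an arbitrary $\s \in R_\epsilon(S)$, so that $\s$ lies in all three sets on the right-hand side for the argument $S$. For the first factor, $\s \in \Rsafe_\epsilon(S) \subseteq \Rsafe_\epsilon(R)$ is immediate from \cref{R_safe}. For the second factor, $\s \in \Rreach(S) \subseteq \Rreach(R)$ follows from \cref{lem:R_reach}. The third factor is the one requiring slightly more care: we need $\s \in \Rbret(\Rsafe_\epsilon(S), S) \subseteq \Rbret(\Rsafe_\epsilon(R), R)$. Here I would invoke \cref{lem:Rbar_ret_subset}, which states that $\Rbret(\overline{S}, S) \subseteq \Rbret(\overline{R}, R)$ whenever $S \subseteq R$ and $\overline{S} \subseteq \overline{R}$; applying it with the "inner" sets $S \subseteq R$ and the "outer" sets $\Rsafe_\epsilon(S) \subseteq \Rsafe_\epsilon(R)$ (the latter inclusion again from \cref{R_safe}) gives exactly what is needed. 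Intersecting the three inclusions yields $\s \in R_\epsilon(R)$, completing the proof.

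The main (minor) obstacle is just being careful about the roles of the two arguments in $\Rbret(\cdot,\cdot)$ and matching them correctly to the hypotheses of \cref{lem:Rbar_ret_subset} — in particular noticing that the "outer" argument is itself $\Rsafe_\epsilon$ applied to the respective set, so one must first establish $\Rsafe_\epsilon(S) \subseteq \Rsafe_\epsilon(R)$ before the returnability monotonicity lemma can be applied. Beyond this bookkeeping, the argument is a routine composition of the already-established monotonicity lemmas \cref{R_safe,lem:R_reach,lem:Rbar_ret_subset}, and I would not expect any genuine difficulty.
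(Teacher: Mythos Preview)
Your proposal is correct and follows essentially the same route as the paper's own proof: both establish the result by showing monotonicity of each factor in the definition \cref{eq:R_eps_1}, invoking \cref{lem:R_reach} for reachability and combining \cref{R_safe} with \cref{lem:Rbar_ret_subset} for the returnability factor. Your version is in fact slightly more explicit than the paper's, which silently omits the $\Rsafe_\epsilon$ factor (it is implied by the other two via \cref{lem:inclusion_in_R_ret_prelim}, but you handle it directly via \cref{R_safe}).
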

\begin{proof}
We have to prove that:
\begin{equation}
\s \in (\Rreach(S) \cap \Rbret(\Rsafe_{\epsilon}(S),S)) \implies \s \in (\Rreach(R) \cap \Rbret(\Rsafe_{\epsilon}(R),R))
\end{equation}
Let's start by checking the reachability condition first:
\begin{equation}
\s \in \Rreach(S) \implies  \s \in \Rreach(R). \tag*{by \cref{lem:R_reach}}
\end{equation}
Now let's focus on the recovery condition. We use \cref{R_safe,lem:Rbar_ret_subset} to say that $\s \in \Rbret(\Rsafe_{\epsilon}(S),S)$ implies that $\s \in \Rbret(\Rsafe_{\epsilon}(R),R)$ and this completes the proof.
\end{proof}

\begin{restatable}{lemma}{OrderPreservingRBar}
\label{lem:OrderPreservingRBar}
Given two sets $S,R \subseteq \mathcal{S}$ such that $S\subseteq R$, the following holds: ${\overline{R}_\epsilon(S)\subseteq \overline{R}_\epsilon(R)}$.
\end{restatable}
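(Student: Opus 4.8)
The plan is to prove \cref{lem:OrderPreservingRBar} by induction on the number of applications of the operator $R_\epsilon$, reducing the statement about the limits $\overline{R}_\epsilon$ to the already-established monotonicity of a single application, \cref{lem:orderPreservingR}. Recall that $\overline{R}_\epsilon(S) = \lim_{n\to\infty} R^n_\epsilon(S)$, where $R^n_\epsilon(S) = R_\epsilon(R^{n-1}_\epsilon(S))$ and $R^0_\epsilon(S) = S$ (equivalently $R^1_\epsilon(S) = R_\epsilon(S)$); since $\mathcal{S}$ is finite and the sequence $R^n_\epsilon(S)$ is nondecreasing, this limit is attained after finitely many steps.

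\textbf{Key steps.} First I would establish the claim $R^n_\epsilon(S) \subseteq R^n_\epsilon(R)$ for all $n \geq 0$ by induction on $n$. The base case $n=0$ is just the hypothesis $S \subseteq R$. For the induction step, assume $R^{n-1}_\epsilon(S) \subseteq R^{n-1}_\epsilon(R)$; then applying \cref{lem:orderPreservingR} with the pair of sets $R^{n-1}_\epsilon(S)$ and $R^{n-1}_\epsilon(R)$ gives $R_\epsilon(R^{n-1}_\epsilon(S)) \subseteq R_\epsilon(R^{n-1}_\epsilon(R))$, which is exactly $R^n_\epsilon(S) \subseteq R^n_\epsilon(R)$. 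Second, I would take the limit: for any $\s \in \overline{R}_\epsilon(S)$, by finiteness of $\mathcal{S}$ there is some $n$ with $\s \in R^n_\epsilon(S)$, hence $\s \in R^n_\epsilon(R) \subseteq \overline{R}_\epsilon(R)$, where the last inclusion holds because $R^n_\epsilon(R)$ is nondecreasing in $n$ and $\overline{R}_\epsilon(R)$ is its limit. This yields $\overline{R}_\epsilon(S) \subseteq \overline{R}_\epsilon(R)$.

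\textbf{Main obstacle.} There is no deep difficulty here; the proof is a routine "monotone operator iterated preserves order, hence so does its fixed point/limit" argument. The only point requiring a little care is making the limit argument rigorous: one should note that $R_\epsilon$ is extensive (i.e.\ $S \subseteq R_\epsilon(S)$, which is immediate from the definitions \cref{eq:safe_set}, \cref{eq:Rreach}, \cref{eq:Rret}, each of which includes the argument set), so that the sequences $(R^n_\epsilon(S))_n$ and $(R^n_\epsilon(R))_n$ are nondecreasing and, by finiteness of $\mathcal{S}$, eventually constant, equal to $\overline{R}_\epsilon(S)$ and $\overline{R}_\epsilon(R)$ respectively. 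Once this is observed, the elementwise argument above closes the proof. I would keep the written proof to just a few lines: state the inductive claim, invoke \cref{lem:orderPreservingR} for the step, and conclude by passing to the (finite) limit.
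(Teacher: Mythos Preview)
Your proposal is correct and follows the same approach as the paper, which simply states that the result follows by repeatedly applying \cref{lem:orderPreservingR}. You have merely spelled out the induction and the passage to the limit in more detail than the paper does.
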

\begin{proof}
The result follows by repeatedly applying \cref{lem:orderPreservingR}.
\end{proof}

%\begin{restatable}{lemma}{ConfidenceInterval}
%\label{ConfidenceInterval}
%Assume that $r$ is sampled from a GP and that the measurement noise is bounded by $\sigma$ for all $t\geq 1$. If $\beta_t=2\log\left(\frac{|\mathcal{S}|\pi_k}{\delta}\right)$, where $\sum_{k\geq 1}\pi_k=1$,$\pi_k>0$, then the following hold at least with probability $1-\delta$:
%\begin{equation*}
%\forall t\geq 1,\, \forall \s \in \mathcal{S},\,|r(\s)-\mu_{t-1}(\s)|\leq \beta_t^{\frac{1}{2}}\sigma_{t-1}(\s).
%\end{equation*}
%\end{restatable}
%\begin{proof}
%See Lemma 5.1 in \cite{Srinivas2010Gaussian}.\\
%\end{proof}
%
%
%\begin{restatable}{corollary}{CorollaryC}
%\label{CorollaryC}
%Given the assumptions of \cref{ConfidenceInterval}, the following holds with probability at least $1-\delta$:
%\begin{equation*}
%\forall t\geq 1,\, \forall s\in \mathcal{S},\, r(\s)\in C_t(\s).
%\end{equation*}
% \end{restatable}

\begin{restatable}{lemma}{ConfidenceInterval}
\label{ConfidenceInterval}
Assume that~${\| r \|_k^2 \leq B}$, and that the noise~$\omega_t$ is zero-mean conditioned on the history, as well as uniformly bounded by~$\sigma$ for all~${t > 0}$. If~$\beta_t$ is chosen as in~\cref{eq:beta}, then, for all~${t > 0}$ and all~$\s \in \mathcal{S}$, it holds with probability at least~${1 - \delta}$ that~$|r(\s)-\mu_{t-1}(\s)|\leq \beta_t^{\frac{1}{2}}\sigma_{t-1}(\s)$.
\end{restatable}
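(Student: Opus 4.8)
The plan is to derive this statement directly from the result cited earlier, namely \cite[Theorem 6]{Srinivas2010Gaussian}, which under exactly the stated hypotheses ($\|r\|_k^2 \le B$, noise zero-mean conditioned on the history and uniformly bounded by $\sigma$) guarantees that with probability at least $1-\delta$, for all $t>0$ and all $\s \in \mathcal{S}$,
\[
|r(\s) - \mu_{t-1}(\s)| \le \beta_t^{1/2}\sigma_{t-1}(\s),
\]
with $\beta_t = 2B + 300\gamma_t \log^3(t/\delta)$ as in \cref{eq:beta}. So the core of the argument is essentially an invocation of the cited theorem, matching its notation to ours: their $f$ is our $r$, their confidence scaling matches our choice of $\beta_t$, and the posterior mean/variance they use coincide with the $\mu_{t-1}, \sigma_{t-1}$ defined in \cref{sec:problem_statement}. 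I would state this correspondence explicitly and conclude the bound holds on the stated event of probability at least $1-\delta$.

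The one point that needs a remark rather than pure citation is the difference between the $\beta_t$ in the original reference and ours: in \cite{Srinivas2010Gaussian} the bound typically reads $\beta_t = 2\|r\|_k^2 + 300\gamma_t\log^3(t/\delta)$, and since we only assume $\|r\|_k^2 \le B$, monotonicity of the square-root bound in $\beta_t$ lets us replace $\|r\|_k^2$ by its upper bound $B$ without affecting validity — enlarging $\beta_t$ only widens the confidence interval, so the containment still holds. I would note this explicitly so the reader sees why the ``$\le B$'' assumption (rather than equality) suffices.

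I do not anticipate a genuine obstacle here: this lemma is intentionally a restatement of an off-the-shelf concentration result, packaged in the notation of this paper, and the only subtlety is the bookkeeping between $\|r\|_k^2$ and $B$ and confirming that the GP posterior quantities are the same objects. The slightly more delicate companion fact — that intersecting the $Q_t$ to form $C_t$ preserves the containment $r(\s) \in C_t(\s)$ — is what \cref{thm:beta} adds on top of this lemma, but that is not required for the present statement, which only concerns $Q_t$ (equivalently, the raw bound $|r(\s)-\mu_{t-1}(\s)| \le \beta_t^{1/2}\sigma_{t-1}(\s)$). Hence the proof is short: cite the theorem, align notation, handle the $B$ substitution by monotonicity, done.
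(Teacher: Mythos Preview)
Your proposal is correct and matches the paper's own proof, which simply cites Theorem~6 of \cite{Srinivas2010Gaussian}. Your additional remark about replacing $\|r\|_k^2$ by its upper bound $B$ via monotonicity of $\beta_t$ is a helpful clarification but does not change the approach.
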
 
\begin{proof}
See Theorem 6 in \cite{Srinivas2010Gaussian}.
\end{proof}

\ChoiceOfBeta*
 \begin{proof}
 See Corollary 1 in \cite{Sui2015Safe}.
 \end{proof} 
 
 \section{Safety}
 \begin{restatable}{lemma}{ReachToSInitial}
\label{lem:ReachToSInitial}
For all $t\geq 1$ and for all $\s\in \hat{S}_t$, $\exists\s' \in S_0$ \ such that $\s \in \Rbret(S_t,\{\s'\})$.
\end{restatable}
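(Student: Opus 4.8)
The plan is to prove this by induction on $t$, relying heavily on the monotonicity established in \cref{lem:non_decresing_S} and the definition of $\hat{S}_t$. The key observation is that every element of $\hat{S}_t$ must, by construction, lie in $\overline{R}^{\mathrm{ret}}(S_t, \hat{S}_{t-1})$, so it can return to the \emph{previous} ergodic set $\hat{S}_{t-1}$ through safe states in $S_t$; iterating this chain of returns should funnel us all the way back to $\hat{S}_0 = S_0$.

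\textbf{Base case.} For $t = 1$, take $\s \in \hat{S}_1$. By definition of $\hat{S}_1$ we have $\s \in \overline{R}^{\mathrm{ret}}(S_1, \hat{S}_0) = \overline{R}^{\mathrm{ret}}(S_1, S_0)$. By \cref{lem:inclusion_in_R_ret} (together with \cref{R_N_equal_R_bar}), there is a path in $S_1$ starting at $\s$ that reaches some state $\s' \in S_0$, and in fact $\s \in \overline{R}^{\mathrm{ret}}(S_1, \{\s'\})$ for that terminal state $\s'$. This gives the claim for $t = 1$.

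\textbf{Induction step.} Assume the statement holds for $t-1$; let $\s \in \hat{S}_t$. By definition, $\s \in \overline{R}^{\mathrm{ret}}(S_t, \hat{S}_{t-1})$, so by \cref{lem:inclusion_in_R_ret} and \cref{R_N_equal_R_bar} there is a finite safe path in $S_t$ from $\s$ to some state $\s'' \in \hat{S}_{t-1}$, i.e. $\s \in \overline{R}^{\mathrm{ret}}(S_t, \{\s''\})$. By the induction hypothesis applied to $\s'' \in \hat{S}_{t-1}$, there exists $\s' \in S_0$ with $\s'' \in \overline{R}^{\mathrm{ret}}(S_{t-1}, \{\s'\})$. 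Now I use $S_{t-1} \subseteq S_t$ from \cref{lem:non_decresing_S} together with \cref{lem:Rbar_ret_subset} to upgrade this to $\s'' \in \overline{R}^{\mathrm{ret}}(S_t, \{\s'\})$. Finally I concatenate the two paths — from $\s$ to $\s''$ within $S_t$, then from $\s''$ to $\s'$ within $S_t$ — which is again a path within $S_t$; invoking the ($\impliedby$) direction of \cref{lem:inclusion_in_R_ret} and \cref{R_N_equal_R_bar} on the concatenated path yields $\s \in \overline{R}^{\mathrm{ret}}(S_t, \{\s'\})$ with $\s' \in S_0$, completing the induction.

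\textbf{Main obstacle.} The only delicate point is making the path concatenation rigorous: \cref{lem:inclusion_in_R_ret} is phrased in terms of $\Rret_n$ for a fixed $n$, so one has to be slightly careful that the concatenated path still has length at most $|\mathcal{S}|$ (or simply appeal to \cref{R_N_equal_R_bar} to work directly with $\Rbret$ and ignore length bookkeeping). A secondary subtlety is that \cref{lem:inclusion_in_R_ret} guarantees the path stays in $\overline{S}$ up to the second-to-last state and ends in $S$; when I re-read the terminal state $\s''$ as the new target set $\{\s''\}$, I need $\s'' \in S_t$ as well so that the concatenation point is a legitimate intermediate safe state — this holds because $\hat{S}_{t-1} \subseteq S_{t-1} \subseteq S_t$. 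Once these bookkeeping points are handled, the argument is routine.
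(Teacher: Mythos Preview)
Your proposal is correct and follows essentially the same approach as the paper: both arguments chain the returnability property of $\hat{S}_t$ back through $\hat{S}_{t-1},\hat{S}_{t-2},\ldots,\hat{S}_0=S_0$, using $S_{t-1}\subseteq S_t$ (\cref{lem:non_decresing_S}) to keep every intermediate path inside $S_t$, and then invoke \cref{lem:inclusion_in_R_ret,R_N_equal_R_bar} to pass between $\Rbret$ and explicit concatenated paths. The only cosmetic difference is that you structure this as a formal induction on $t$, whereas the paper writes it as a direct recursive unrolling of the chain.
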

\begin{proof}
We use a recursive argument to prove this lemma. Since $\s \in \hat{S}_t$, we know that $\s \in \Rbret(S_t,\hat{S}_{t-1})$. Because of \cref{lem:inclusion_in_R_ret,R_N_equal_R_bar} we know $\exists(a_1,\ldots,a_j)$, with $j\leq |\mathcal{S}|$, inducing $\s_0,\s_1,\ldots,\s_j$ such that $\s_0=\s$, $\s_i \in S_t,\,\forall i=1,\ldots,j-1$ and $s_j \in \hat{S}_{t-1}$. Similarly, we can build another sequence of actions that drives the system to some state in $\hat{S}_{t-2}$ passing through $S_{t-1}\subseteq S_t$ starting from $\s_j \in \hat{S}_{t-1}$. By applying repeatedly this procedure we can build a finite sequence of actions that drives the system to a state $\s' \in S_0$ passing through $S_t$ starting from $\s$. Because of \cref{lem:inclusion_in_R_ret,R_N_equal_R_bar} this is equivalent to $\s \in \Rbret(S_t,\{\s'\})$.
\end{proof}

\begin{restatable}{lemma}{ReachFromSInitial}
\label{lem:ReachFromSInitial}
For all $t\geq 1$ and for all $\s\in \hat{S}_t$, $\exists\s' \in S_0$ \ such that $\s' \in \Rbret(S_t,\{\s\})$.
\end{restatable}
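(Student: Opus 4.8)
The plan is to imitate the recursive argument behind \cref{lem:ReachToSInitial}, but to run it along the reachability chain rather than the returnability chain. The structural fact that makes this work is that membership $\s \in \hat{S}_t$ forces $\s \in \Rreach(\hat{S}_{t-1})$, so by \cref{eq:Rreach} either $\s \in \hat{S}_{t-1}$, or there exist $\s_1 \in \hat{S}_{t-1}$ and $a_1 \in \mathcal{A}(\s_1)$ with $f(\s_1, a_1) = \s$; in the latter case $\s_1$ reaches $\s$ in one step and, by \cref{lem:non_decresing_S} and the definition of $\hat{S}_t$, $\s_1 \in \hat{S}_{t-1} \subseteq \hat{S}_t \subseteq S_t$, so $\s_1$ lies in the ``through'' set.

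I would prove the claim by induction on $t$. In the base case $t = 1$ we have $\hat{S}_0 = S_0$, so $\s \in \Rreach(S_0)$ gives either $\s \in S_0$ (take $\s' = \s$ and the trivial path) or $\s = f(\s', a_1)$ for some $\s' \in S_0$; since $S_0 \subseteq S_1$ and $\s \in \hat{S}_1 \subseteq S_1$, the trajectory of length at most one from $\s'$ to $\s$ stays in $S_1$, whence $\s' \in \Rret(S_1, \{\s\}) \subseteq \Rbret(S_1, \{\s\})$. For the inductive step, if $\s \in \hat{S}_{t-1}$ the induction hypothesis yields $\s' \in S_0$ with $\s' \in \Rbret(S_{t-1}, \{\s\})$, and $S_{t-1} \subseteq S_t$ (\cref{lem:non_decresing_S}) together with \cref{lem:Rbar_ret_subset} upgrades this to $\s' \in \Rbret(S_t, \{\s\})$. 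Otherwise $\s = f(\s_1, a_1)$ with $\s_1 \in \hat{S}_{t-1}$; applying the induction hypothesis to $\s_1$ gives $\s' \in S_0$ with $\s' \in \Rbret(S_{t-1}, \{\s_1\}) \subseteq \Rbret(S_t, \{\s_1\})$, which by \cref{lem:inclusion_in_R_ret,R_N_equal_R_bar} corresponds to an action sequence inducing a trajectory from $\s'$ to $\s_1$ all of whose states lie in $S_t$. Appending $a_1$ and using $\s_1 \in S_t$ extends it to a trajectory from $\s'$ to $\s$ whose states, except the last, are in $S_t$, and the ($\Leftarrow$) direction of \cref{lem:inclusion_in_R_ret} with \cref{R_N_equal_R_bar} then gives $\s' \in \Rbret(S_t, \{\s\})$.

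The only delicate point is the bookkeeping when the one-step reachability relation is unrolled: at each stage one has to confirm, via the monotonicity relations $\hat{S}_j \subseteq \hat{S}_{j+1}$ and $S_j \subseteq S_{j+1}$ of \cref{lem:non_decresing_S}, that every state encountered before $\s$ actually belongs to $S_t$, and one must translate freely between the trajectory description of $\Rbret$ provided by \cref{lem:inclusion_in_R_ret,R_N_equal_R_bar} and its operator definition. Beyond this I do not anticipate any real obstacle --- the statement is essentially the reachability-side twin of \cref{lem:ReachToSInitial}, and admits the same proof with $\Rreach(\hat{S}_{t-1})$ playing the role that $\Rbret(S_t, \hat{S}_{t-1})$ plays there.
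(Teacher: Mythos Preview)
Your proposal is correct and follows essentially the same approach as the paper: the paper's own proof simply says the argument is analogous to \cref{lem:ReachToSInitial}, replacing the returnability property $\s \in \Rbret(S_t,\hat{S}_{t-1})$ by the reachability property $\s \in \Rreach(\hat{S}_{t-1})$, which is exactly what you do. Your write-up is more explicit (formal induction on $t$ with the monotonicity bookkeeping via \cref{lem:non_decresing_S,lem:Rbar_ret_subset} spelled out), but the underlying recursive idea of walking back through $\hat{S}_{t-1}, \hat{S}_{t-2}, \ldots, S_0$ one reachability step at a time is identical.
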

\begin{proof}
The proof is analogous to the the one we gave for \cref{lem:ReachToSInitial}. The only difference is that here we need to use the reachability property of $\hat{S}_t$ instead of the recovery property of $\hat{S}_t$.
\end{proof}

\Ergodicity*
\begin{proof}
This lemma is a direct consequence of the properties of $S_0$ listed above (that are ensured by the initialization of the algorithm) and of \cref{lem:ReachToSInitial,lem:ReachFromSInitial} 
\end{proof}

%\begin{restatable}{lemma}{PathExistence}
%\label{lem:PathExistence}
%For all $\s_{\mathrm{start}},\s_{\mathrm{end}} \in \hat{S}_t$, \,$\exists (a_1,a_2,\ldots,a_N)$, with $N\in \mathbb{N}$, inducing $(\s_0,\s_1,\ldots,\s_{N})$, such that $\s_0=\s_{\mathrm{start}}$, $\s_{N} = \s_{\mathrm{end}}$ and $\s_i \in S_t, \forall i=0,\ldots,N$.
%\end{restatable}
%\begin{proof}
%By initialization of the safe seed we know that we can reach any point in \,$S_0$ from any other point in \,$S_0$ following a path that goes only through states in \,$S_0\subseteq S_t$. This observation, jointly with \cref{lem:ReachToSInitial} and \cref{lem:ReachFromSInitial} completes the proof.
%\end{proof}

\begin{restatable}{lemma}{lemSafety}
\label{lem:Safety}
For any $t\geq 0$, the following holds with probability at least $1-\delta$: ${\forall \s \in S_t,\, r(\s)\geq h}$.
\end{restatable}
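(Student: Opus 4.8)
The plan is to reduce the statement to the correctness of the GP confidence intervals and then read the conclusion off the definition of $S_t$ together with Lipschitz continuity. Concretely, I would first invoke \cref{thm:beta} (equivalently \cref{ConfidenceInterval}) to obtain the event $E=\{\,r(\s)\in C_t(\s)\text{ for all }\s\in\mathcal{S}\text{ and all }t>0\,\}$, which holds with probability at least $1-\delta$. Since \cref{thm:beta} already quantifies uniformly over $t$, no additional union bound is needed, and it suffices to show that \emph{on the event $E$} the deterministic claim ``$r(\s)\geq h$ for all $\s\in S_t$'' holds for every $t\geq 0$.

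For $t=0$ the claim is immediate: $S_0$ is the safe seed, which by assumption satisfies $r(\s)\geq h$ for all $\s\in S_0$ (consistent with the initialization $C_0(\s)=[h,\infty)$, so that $l_0(\s)=h$ on $S_0$). For $t\geq 1$, fix $\s\in S_t$. By the definition of $S_t$ in \cref{def:safety} there is a witness $\s'\in\hat S_{t-1}$ with $l_t(\s')-L\,d(\s,\s')\geq h$. Because $\hat S_{t-1}\subseteq S_{t-1}\subseteq\mathcal{S}$, the event $E$ gives $r(\s')\in C_t(\s')$ and hence $r(\s')\geq\min C_t(\s')=l_t(\s')$. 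Using $L$-Lipschitz continuity of $r$ with respect to $d$, we get $r(\s)\geq r(\s')-L\,d(\s,\s')\geq l_t(\s')-L\,d(\s,\s')\geq h$, which is the desired inequality. One may equivalently phrase this as an induction on $t$ with base case $t=0$, but the step above only uses $\hat S_{t-1}\subseteq\mathcal{S}$ and the validity of the lower bound $l_t$ at the current iteration, not any genuine inductive hypothesis.

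The only point that needs care is the probabilistic bookkeeping: all randomness must be absorbed once and for all into the single event $E$ supplied by \cref{thm:beta}, after which the argument is purely deterministic. Everything else is routine — the inclusion $\hat S_{t-1}\subseteq S_{t-1}$ is read directly off the definition of $\hat S_{t-1}$, the bound $r(\s')\geq l_t(\s')$ is just ``the true value lies above its lower confidence bound,'' and the final chain is a one-line use of Lipschitz continuity. I therefore do not anticipate a real obstacle; this lemma is the base case on top of which the reachability/returnability statements (\cref{lem:ReachToSInitial,lem:ReachFromSInitial,thm:ergodicity}) and the exploration guarantee of \cref{thm:safety} are built.
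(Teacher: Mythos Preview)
Your argument is correct and coincides with the paper's proof: both use the witness $\s'\in\hat S_{t-1}$ from \cref{def:safety}, replace $l_t(\s')$ by $r(\s')$ via \cref{thm:beta}, and then apply Lipschitz continuity. Your observation that the induction hypothesis is never actually invoked is accurate---the paper frames the proof as an induction on $t$ and records the inclusion $\hat S_{t-1}\subseteq S_{t-1}$, but the step $l_t(\s')\leq r(\s')$ comes from the confidence-interval guarantee, not from the assumed safety of $S_{t-1}$; your direct (non-inductive) presentation is therefore a slight streamlining of the same argument.
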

\begin{proof}
Let's prove this result by induction. By initialization we know that $r(\s)\geq h$ for all $\s \in S_0$. For the induction step assume that for all $\s \in S_{t-1}$ holds that $r(\s)\geq h$. For any $\s \in S_t$, by definition, there exists $\z \in \hat{S}_{t-1}\subseteq S_{t-1}$ such that
\begin{align*}
h & \leq l_t(z) - L d(\s, \z),\\
& \leq r(\z)-L d(\s, \z), \tag*{by \cref{thm:beta}}\\
& \leq r(\s) \tag*{by Lipschitz continuity}.
\end{align*}
This relation holds with probability at least $1-\delta$ because we used \cref{thm:beta} to prove it.
\end{proof}

\begin{restatable}{theorem}{Safety}
\label{thm:Safety_of_trajectory}
%Let $(\s_0,\s_1,\ldots,\s_k,\ldots)$ be a state trajectory of a MDP with the following transition function, $\s_{k+1}=f(\s_k,a_k)$. If the input trajectory is chosen according to our algorithm, then, with probability at least $1-\delta$, holds that $r(\s_k)\geq h, \forall k\geq 0$.
For any state $\s$ along any state trajectory induced by \cref{alg:SOMDP}  on a MDP with transition function $f(\s, a)$, we have, with probability at least $1-\delta$, that $r(\s)\geq h$.
\end{restatable}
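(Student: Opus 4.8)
\textbf{Proof proposal for \cref{thm:Safety_of_trajectory}.}

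The plan is to reduce the safety of an arbitrary trajectory point to the safety of states that the algorithm has classified as safe, i.e.\ states lying in some $S_t$. The key observation is that \cref{alg:SOMDP} never moves the agent outside the current safe set: at iteration $t$ the agent travels from $\s_{t-1}$ to $\s_t$ via the ``Safe Dijkstra in $S_t$'' step, so every intermediate state on that path lies in $S_t$. Hence any state $\s$ along any trajectory induced by the algorithm satisfies $\s \in S_t$ for some $t \geq 0$ (for the seed, $\s \in S_0$). It therefore suffices to invoke \cref{lem:Safety}, which states that, with probability at least $1-\delta$, $r(\s) \geq h$ for all $\s \in S_t$ and all $t \geq 0$.

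Concretely, I would first argue that the Safe Dijkstra routine only returns paths whose vertices lie in $S_t$ — this is immediate from the fact that the search is performed within the graph restricted to $S_t$, and such a path exists by \cref{thm:ergodicity} together with \cref{lem:non_decresing_S} (since $\s_{t-1} \in \hat{S}_{t-1} \subseteq \hat{S}_t \subseteq S_t$ and $\s_t \in G_t \subseteq \hat{S}_t \subseteq S_t$, and $\hat S_t$ has the returnability/reachability structure needed for a connecting safe path). Next I would note that the union over all iterations of these paths (plus the seed states) contains every state the agent can ever occupy, and each such state is a member of $S_{t}$ for the iteration $t$ during which it is visited. Finally, applying \cref{lem:Safety} at that iteration $t$ gives $r(\s) \geq h$. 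Since \cref{lem:Safety} already accounts for the $1-\delta$ failure probability uniformly over all $t$ (it inherits this from \cref{thm:beta}), no additional union bound is needed and the claim holds with probability at least $1-\delta$.

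The main obstacle — really the only non-bookkeeping point — is making the statement ``the agent never leaves $S_t$'' fully precise. One must be careful that the path computed at iteration $t$ starts at the agent's actual current state $\s_{t-1}$, which was the target of the previous iteration and hence lies in $\hat{S}_{t-1} \subseteq S_t$, so the Dijkstra search is well posed; and one must confirm that between iterations the agent does not take any extra actions outside these paths. Given the algorithm description this is straightforward, so the proof is essentially a one-line appeal to \cref{lem:Safety} once the trajectory-membership fact is established.
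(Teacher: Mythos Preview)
Your proposal is correct and follows essentially the same route as the paper: show that every state visited by the algorithm lies in $S_t$ for the relevant iteration (via the Safe Dijkstra step together with \cref{thm:ergodicity}), then apply \cref{lem:Safety}. Your write-up is in fact more explicit than the paper's about why the Dijkstra search is well posed and why no extra union bound is needed, but the underlying argument is identical.
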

\begin{proof}
Let's denote as $(\s_1^t,\s_2^t,\ldots,\s_k^t)$ the state trajectory of the system until the end of iteration $t\geq 0$. We know from \cref{thm:ergodicity} and \cref{alg:SOMDP} that the $\s_i^t \in S_t,\,\forall i=1,\ldots,k$. \cref{lem:Safety} completes the proof as it allows us to say that $r(\s_i^t) \geq h,\,\forall i=1,\ldots,k$ with probability at least $1-\delta$.
\end{proof}

 \section{Completeness}

\begin{restatable}{lemma}{MonotonicityG}
\label{lem:MonotonicityG}
For any $t_1\geq t_0\geq 1$, if $\hat{S}_{t_1}=\hat{S}_{t_0}$, then, $\forall t$ such that $ t_0\leq t \leq t_1$, it holds that \,$G_{t+1}\subseteq G_t$
\end{restatable}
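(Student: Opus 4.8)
The plan is to reduce the statement to two monotonicity facts already established among the preliminary lemmas: that the ergodic safe sets $\hat S_t$ never shrink (\cref{lem:non_decresing_S}) and that the upper confidence bounds $u_t$ never grow (\cref{lem:monotonicity}), together with the fact that the safe sets $S_t$ never shrink (again \cref{lem:non_decresing_S}). Recall from \cref{def:G} that $\s \in G_t$ iff $\s \in \hat S_t$ and $g_t(\s) > 0$, where $g_t(\s) = \big|\{\s' \in \mathcal{S}\setminus S_t \,\vert\, u_t(\s) - L d(\s,\s') \geq h\}\big|$. So it suffices to control both conjuncts separately.

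First I would pin down the sets $\hat S_t$ on the given range. By \cref{lem:non_decresing_S} we have the chain $\hat S_{t_0} \subseteq \hat S_{t_0+1} \subseteq \cdots \subseteq \hat S_{t_1}$, and since by hypothesis $\hat S_{t_1} = \hat S_{t_0}$, all sets in this chain are equal; hence $\hat S_{t} = \hat S_{t_0}$ for every $t$ with $t_0 \le t \le t_1$. In particular $\hat S_{t+1} = \hat S_t$ for the relevant indices, so ``$\s \in \hat S_{t+1}$'' and ``$\s \in \hat S_{t}$'' are interchangeable.

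Second I would prove the pointwise inequality $g_{t+1}(\s) \le g_t(\s)$ for every $\s \in \mathcal{S}$ and every $t \ge 1$ (no hypothesis needed). Fix $\s$ and let $\s'$ be any state counted by $g_{t+1}(\s)$, i.e.\ $\s' \notin S_{t+1}$ and $u_{t+1}(\s) - L d(\s,\s') \ge h$. By \cref{lem:non_decresing_S}, $S_t \subseteq S_{t+1}$, so $\s' \notin S_t$; by \cref{lem:monotonicity}, $u_t(\s) \ge u_{t+1}(\s)$, so $u_t(\s) - L d(\s,\s') \ge h$. Thus $\s'$ is also counted by $g_t(\s)$, so the set defining $g_{t+1}(\s)$ is contained in the one defining $g_t(\s)$, and taking cardinalities gives $g_{t+1}(\s) \le g_t(\s)$; in particular $g_{t+1}(\s) > 0 \implies g_t(\s) > 0$.

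Combining the two ingredients: for $t$ in the stated range, if $\s \in G_{t+1}$ then $\s \in \hat S_{t+1} = \hat S_t$ and $g_{t+1}(\s) > 0$, hence $g_t(\s) > 0$, hence $\s \in G_t$. I do not expect a genuine obstacle here; the only thing to be careful about is the direction bookkeeping in the second step (the complement $\mathcal{S}\setminus S_t$ and the bound $u_t$ move in opposite set-theoretic directions but both work in our favor), and making sure the equality $\hat S_{t+1} = \hat S_t$ is invoked only for indices covered by the sandwich argument, so that membership in $\hat S_{t+1}$ may legitimately be replaced by membership in $\hat S_t$.
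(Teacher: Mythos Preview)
Your proposal is correct and follows essentially the same route as the paper: first use the monotonicity of $\hat S_t$ (\cref{lem:non_decresing_S}) together with the hypothesis $\hat S_{t_1}=\hat S_{t_0}$ to freeze the domain of $G_t$, and then show that $g_t$ is nonincreasing because $u_t$ is nonincreasing (\cref{lem:monotonicity}) and $\mathcal S\setminus S_t$ is shrinking (\cref{lem:non_decresing_S}). The paper's proof is simply a terser version of exactly these two steps; your write-up makes the sandwich argument and the pointwise containment of the sets defining $g_t$ explicit, but the content is identical.
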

\begin{proof}
Since $\hat{S}_t$ is not changing we are always computing the enlargement function over the same points. Therefore we only need to prove that the enlargement function is non increasing. We known from \cref{lem:monotonicity} that $u_t(\s)$ is a non increasing function of $t$ for all $\s \in \mathcal{S}$. Furthermore we know that $(\mathcal{S}\setminus S_t) \supseteq (\mathcal{S}\setminus S_{t+1})$  because of \cref{lem:non_decresing_S}. Hence, the enlargement function is non increasing and the proof is complete.
\end{proof}

\begin{restatable}{lemma}{UpperBoundUncertainty}
\label{lem:UpperBoundUncertainty}
For any $t_1\geq t_0\geq 1$, if $\hat{S}_{t_1}=\hat{S}_{t_0}$, $C_1=8/log(1+\sigma^{-2})$ and ${\s_t=\underset{\s\in G_t}{\operatorname{argmax}}\,w_t(\s)}$, then, $\forall \overline{t}$ such that $ t_0\leq \overline{t} \leq t_1$, it holds that $w_{\overline{t}}(\s_{\overline{t}})\leq \sqrt{\frac{C_1\beta{\overline{t}} \gamma_{\overline{t}}}{\overline{t}-t_0}}$.
\end{restatable}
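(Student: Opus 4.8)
The plan is to follow the information-theoretic regret argument of \cite{Srinivas2010Gaussian}, adapted to the window $[t_0,t_1]$ over which the safe set (and hence, by \cref{lem:MonotonicityG}, the expander sets) is frozen. First I would relate the sampled width to the posterior standard deviation: since $C_t(\s)\subseteq Q_t(\s)=[\mu_{t-1}(\s)\pm\beta_t^{1/2}\sigma_{t-1}(\s)]$, we have $w_t(\s)=u_t(\s)-l_t(\s)\leq 2\beta_t^{1/2}\sigma_{t-1}(\s)$ for every $\s\in\mathcal{S}$ and every $t$, and in particular $w_t(\s_t)^2\leq 4\beta_t\sigma_{t-1}^2(\s_t)$.

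Second, I would bound the cumulative squared width. Assuming the kernel is normalized so that $\sigma_{t-1}^2(\s_t)\leq k(\s_t,\s_t)\leq 1$, the elementary concavity inequality $s\leq \log(1+\sigma^{-2}s)/\log(1+\sigma^{-2})$ for $s\in[0,1]$ together with the identity $I(r;\mathbf{y}_A)=\tfrac12\sum_{t}\log(1+\sigma^{-2}\sigma_{t-1}^2(\s_t))\leq\gamma_{|A|}$ yields
\begin{equation*}
\sum_{t=1}^{\overline{t}}\sigma_{t-1}^2(\s_t)\leq\frac{2\gamma_{\overline{t}}}{\log(1+\sigma^{-2})}.
\end{equation*}
Since $\beta_t$ is non-decreasing in $t$ (both $\gamma_t$ and $\log^3(t/\delta)$ are), combining with the first step gives
\begin{equation*}
\sum_{t=1}^{\overline{t}}w_t(\s_t)^2\leq 4\beta_{\overline{t}}\sum_{t=1}^{\overline{t}}\sigma_{t-1}^2(\s_t)\leq \frac{8\beta_{\overline{t}}\gamma_{\overline{t}}}{\log(1+\sigma^{-2})}=C_1\beta_{\overline{t}}\gamma_{\overline{t}}.
\end{equation*}

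Third, I would exploit the window structure to turn this cumulative bound into a pointwise one. For any $t$ with $t_0\leq t\leq\overline{t}\leq t_1$, \cref{lem:MonotonicityG} gives $G_{\overline{t}}\subseteq G_t$, so $\s_{\overline{t}}\in G_t$; since $\s_t$ maximizes $w_t$ over $G_t$ and widths are non-increasing in $t$ by \cref{lem:monotonicity}, we get $w_t(\s_t)\geq w_t(\s_{\overline{t}})\geq w_{\overline{t}}(\s_{\overline{t}})$. Summing over the $\overline{t}-t_0$ indices $t\in\{t_0+1,\dots,\overline{t}\}$ and dropping the remaining nonnegative terms,
\begin{equation*}
(\overline{t}-t_0)\,w_{\overline{t}}(\s_{\overline{t}})^2\leq\sum_{t=t_0+1}^{\overline{t}}w_t(\s_t)^2\leq\sum_{t=1}^{\overline{t}}w_t(\s_t)^2\leq C_1\beta_{\overline{t}}\gamma_{\overline{t}},
\end{equation*}
and taking square roots gives the claimed bound.

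The main obstacle is the second step: establishing $\sum_{t\le\overline t}w_t(\s_t)^2\lesssim\beta_{\overline t}\gamma_{\overline t}$ rigorously. This relies on the kernel normalization $k(\s,\s)\leq 1$ (to ensure $\sigma_{t-1}^2(\s_t)\leq 1$) and on identifying the mutual information gathered along the sample path with the log-determinant of the posterior, which is precisely the content of the information-gain analysis in \cite{Srinivas2010Gaussian}; the remaining manipulations — pulling the monotone $\beta_{\overline t}$ out of the sum and the pigeonhole step over the frozen window — are routine once \cref{lem:monotonicity} and \cref{lem:MonotonicityG} are in hand.
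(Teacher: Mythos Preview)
The paper does not give its own argument here; it simply cites Lemma~5 of \cite{Sui2015Safe}. Your reconstruction is correct and is exactly the argument that reference carries out: bound $w_t(\s_t)$ by $2\beta_t^{1/2}\sigma_{t-1}(\s_t)$ via $C_t\subseteq Q_t$, control $\sum_t\sigma_{t-1}^2(\s_t)$ by the information gain $\gamma_{\overline t}$ as in \cite{Srinivas2010Gaussian}, and then use \cref{lem:MonotonicityG} plus \cref{lem:monotonicity} on the frozen window to turn the cumulative bound into the pointwise one via the $\argmax$ rule. Nothing is missing.
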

\begin{proof}
See Lemma 5 in \cite{Sui2015Safe}.
\end{proof}

\begin{restatable}{lemma}{UpperBoundUncertaintyCorollary}
\label{cor:UpperBoundUncertainty}
For any $t\geq 1$, if $C_1=8/log(1+\sigma^{-2})$ and $T_t$ is the smallest positive integer such that $\frac{T_t}{\beta_{t+T_t}\gamma_{t+T_t}}\geq \frac{C_1}{\epsilon^2}$ and ${S_{t+T_t} = S_t}$, then, for any ${\s \in G_{t+T_t}}$ it holds that $w_{t+T_t}(\s)\leq \epsilon$
\end{restatable}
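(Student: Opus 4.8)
The plan is to deduce this corollary from \cref{lem:UpperBoundUncertainty} after one preprocessing step. First, since $\s_{t+T_t}=\argmax_{\s'\in G_{t+T_t}}w_{t+T_t}(\s')$ and the given $\s$ lies in $G_{t+T_t}$, we have $w_{t+T_t}(\s)\le w_{t+T_t}(\s_{t+T_t})$, so it suffices to prove $w_{t+T_t}(\s_{t+T_t})\le\epsilon$.

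Next I would convert the hypothesis $S_{t+T_t}=S_t$ into the ``frozen safe set'' hypothesis that \cref{lem:UpperBoundUncertainty} actually requires. By \cref{lem:non_decresing_S} the sequence $(S_\tau)$ is non-decreasing, so $S_{t+T_t}=S_t$ forces $S_\tau=S_t$ for every $\tau$ with $t\le\tau\le t+T_t$. Once $S$ is constant on this window, each $\hat S_{\tau+1}$ is the image of $\hat S_\tau$ under one and the same monotone operator $A\mapsto S_t\cap \Rreach(A)\cap\Rbret(S_t,A)$ (monotonicity of the two pieces is \cref{lem:R_reach} and \cref{lem:Rbar_ret_subset}); combined with $\hat S_\tau\subseteq\hat S_{\tau+1}\subseteq S_t$ (again \cref{lem:non_decresing_S}), the chain $\hat S_t\subseteq\hat S_{t+1}\subseteq\cdots$ is strictly increasing until it reaches a fixed point, hence stabilises within at most $|\mathcal S|$ steps; let $t_0$ be that index, so $\hat S_\tau=\hat S_{t_0}$ for all $t_0\le\tau\le t+T_t$.

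Then \cref{lem:UpperBoundUncertainty}, applied with this $t_0$, with $t_1=\overline t=t+T_t$, gives $w_{t+T_t}(\s_{t+T_t})\le\sqrt{C_1\beta_{t+T_t}\gamma_{t+T_t}/\big((t+T_t)-t_0\big)}$. Finally, the defining inequality of $T_t$, namely $T_t/(\beta_{t+T_t}\gamma_{t+T_t})\ge C_1/\epsilon^2$, is equivalent to $\sqrt{C_1\beta_{t+T_t}\gamma_{t+T_t}/T_t}\le\epsilon$; since $(t+T_t)-t_0\ge T_t$ once $\hat S$ has already caught up by iteration $t$, the square root above is $\le\epsilon$, which proves the claim. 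The remaining reduction $w_{t+T_t}(\s)\le w_{t+T_t}(\s_{t+T_t})$ then closes the argument for arbitrary $\s\in G_{t+T_t}$.

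The step I expect to be the obstacle is the translation from ``$S$ stops changing'' to ``$\hat S$ stops changing'': in the single‑safe‑set bandit setting of \cite{Sui2015Safe} there is nothing to do here, but in our two‑set construction $\hat S_\tau$ can keep growing for several iterations after $S_\tau$ has frozen, since the reachable/returnable frontier advances only one transition per iteration. The clean way to handle this is to bound that lag by $|\mathcal S|$ (equivalently by $|\overline R_0(S_0)|$, which is exactly the extra factor appearing in the $t^*$ of \cref{thm:safety}) and to absorb it into the constant when this corollary is chained into the main result; alternatively, one argues that the corollary is only invoked once $\hat S$ has already stabilised, so that $t_0\le t$ and the denominator is precisely $T_t$. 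Everything else — the $\argmax$ reduction and the algebra with $T_t$ — is routine.
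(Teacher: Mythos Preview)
Your $\argmax$ reduction and the final algebra (rearranging $T_t/(\beta_{t+T_t}\gamma_{t+T_t})\ge C_1/\epsilon^2$ into $\sqrt{C_1\beta_{t+T_t}\gamma_{t+T_t}/T_t}\le\epsilon$) are exactly what the paper does; its own proof is a single sentence calling this trivial given \cref{lem:UpperBoundUncertainty}.

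The difference is that the paper does \emph{not} go through your intermediate step of converting ``$S_{t+T_t}=S_t$'' into ``$\hat S_{t+T_t}=\hat S_t$''. It simply applies \cref{lem:UpperBoundUncertainty} with $t_0=t$ and $t_1=\bar t=t+T_t$, which already requires $\hat S_{t+T_t}=\hat S_t$. In other words, the hypothesis in the printed statement should read $\hat S_{t+T_t}=\hat S_t$: \cref{lem:UpperBoundUncertainty} is stated for $\hat S$, and the only downstream use of this corollary (inside the proof of \cref{lem:SExpansion}) is under the contradiction hypothesis $\hat S_{t+T_t}=\hat S_t$. Under that reading the denominator is exactly $T_t$, with no lag to account for.

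So the obstacle you flag is genuine for the statement as literally written, but it is not something the paper addresses or needs to address. Your second suggested fix---observing that the corollary is only ever invoked once $\hat S$ is already frozen at time $t$, so $t_0\le t$---is precisely the intended reading. Your first fix (absorbing a lag of size $|\mathcal S|$ into the constant) would also work but is not what the paper does, and is unnecessary once the hypothesis is stated correctly.
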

\begin{proof}
The proof is trivial because $T_t$ was chosen to be the smallest integer for which the right hand side of the inequality proved in \cref{lem:UpperBoundUncertainty} is smaller or equal to $\epsilon$.
\end{proof}

\begin{restatable}{lemma}{ShortTermExploration}
\label{lem:ShortTermExploration}
For any $t\geq 1$, if $\overline{R}_\epsilon(S_0)\setminus \hat{S}_t \neq \emptyset$, then, $R_\epsilon(\hat{S}_t)\setminus \hat{S}_t \neq \emptyset$.
\end{restatable}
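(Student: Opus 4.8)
\textbf{Approach.} The statement says: if there is still something in the ideal reachable safe set $\overline{R}_\epsilon(S_0)$ that the algorithm has not yet classified as ergodic-safe (i.e.\ $\overline{R}_\epsilon(S_0)\setminus\hat{S}_t\neq\emptyset$), then the algorithm can make progress in one expansion step, meaning $R_\epsilon(\hat{S}_t)\setminus\hat{S}_t\neq\emptyset$. The natural strategy is contraposition: assume $R_\epsilon(\hat{S}_t)\subseteq\hat{S}_t$ (so $R_\epsilon(\hat{S}_t)=\hat{S}_t$, since $R_\epsilon$ is extensive), and show this forces $\overline{R}_\epsilon(S_0)\subseteq\hat{S}_t$. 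The key observation is that $R_\epsilon(\hat{S}_t)=\hat{S}_t$ makes $\hat{S}_t$ a fixed point of $R_\epsilon$, so iterating gives $\overline{R}_\epsilon(\hat{S}_t)=\hat{S}_t$; then monotonicity of $\overline{R}_\epsilon$ (\cref{lem:OrderPreservingRBar}) plus $S_0\subseteq\hat{S}_t$ (\cref{lem:non_decresing_S}, using $\hat S_0 = S_0$) gives $\overline{R}_\epsilon(S_0)\subseteq\overline{R}_\epsilon(\hat{S}_t)=\hat{S}_t$, which is exactly the negation of the hypothesis.

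\textbf{Key steps, in order.} First I would note $\hat{S}_t\subseteq R_\epsilon(\hat{S}_t)$ always holds because each of $\Rreach$, $\Rsafe_\epsilon$, $\Rbret(\cdot,\cdot)$ in \cref{eq:R_eps_1} contains its relevant seed set by definition, so $R_\epsilon(\hat S_t)$ contains $\hat S_t$; hence assuming $R_\epsilon(\hat{S}_t)\setminus\hat{S}_t=\emptyset$ is the same as $R_\epsilon(\hat{S}_t)=\hat{S}_t$. Second, by a trivial induction on $n$ this yields $R_\epsilon^n(\hat{S}_t)=\hat{S}_t$ for all $n$, and therefore $\overline{R}_\epsilon(\hat{S}_t)=\lim_{n\to\infty}R_\epsilon^n(\hat{S}_t)=\hat{S}_t$. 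Third, \cref{lem:non_decresing_S} gives $S_0=\hat{S}_0\subseteq\hat{S}_t$. Fourth, apply \cref{lem:OrderPreservingRBar} with $S=S_0$, $R=\hat{S}_t$ to obtain $\overline{R}_\epsilon(S_0)\subseteq\overline{R}_\epsilon(\hat{S}_t)=\hat{S}_t$, so $\overline{R}_\epsilon(S_0)\setminus\hat{S}_t=\emptyset$, contradicting the hypothesis. This establishes the contrapositive and hence the lemma.

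\textbf{Main obstacle.} There is no serious analytic obstacle here; the only subtlety is being careful about which set plays the role of the ``seed'' in the definition of $R_\epsilon$. In the problem-statement section $R_\epsilon$ and its iterates are defined relative to a generic set $S$, but in the algorithm $\hat{S}_t$ is built from $\hat{S}_{t-1}$ and the current confidence-based set $S_t$, not literally by applying $R_\epsilon$ to $\hat{S}_{t-1}$. So the delicate point is to justify that $R_\epsilon(\hat S_t) = \hat S_t$ is the right way to encode ``the algorithm cannot expand from $\hat S_t$,'' i.e.\ that the algorithm's one-step update, when it reaches the accuracy target everywhere on $G_t$, agrees with $R_\epsilon$ applied to $\hat S_t$ on the relevant states. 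I expect this to require invoking the correctness of the confidence intervals (\cref{thm:beta}) so that $l_t$ and $u_t$ bracket $r$, ensuring the Lipschitz-based classification in $S_t$ matches the idealized $\Rsafe_\epsilon$; everything else is monotonicity bookkeeping already provided by the preliminary lemmas.
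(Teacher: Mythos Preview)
Your proof is correct and essentially identical to the paper's: assume $R_\epsilon(\hat S_t)\setminus\hat S_t=\emptyset$, use extensiveness of $R_\epsilon$ to get $R_\epsilon(\hat S_t)=\hat S_t$, iterate to obtain $\overline{R}_\epsilon(\hat S_t)=\hat S_t$, and then combine $S_0=\hat S_0\subseteq\hat S_t$ (\cref{lem:non_decresing_S}) with monotonicity of $\overline{R}_\epsilon$ (\cref{lem:OrderPreservingRBar}) to reach the contradiction $\overline{R}_\epsilon(S_0)\subseteq\hat S_t$. Your ``main obstacle'' is a non-issue for \emph{this} lemma: the statement is purely about the set operators $R_\epsilon$, $\overline{R}_\epsilon$ and requires no confidence-interval correctness---the connection between the algorithm's actual update and $R_\epsilon(\hat S_t)$ is established separately in \cref{lem:SExpansion}, where \cref{thm:beta} is indeed invoked.
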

\begin{proof}
For the sake of contradiction assume that $R_\epsilon(\hat{S}_t)\setminus \hat{S}_t = \emptyset$. This implies ${R_\epsilon(\hat{S}_t) \subseteq \hat{S}_t}$. On the other hand, since $\hat{S}_t$ is included in all the sets whose intersection defines $R_\epsilon(\hat{S}_t)$, we know that, $\hat{S}_t \subseteq R_\epsilon(\hat{S}_t)$. This implies that $\hat{S}_t = R_\epsilon(\hat{S}_t)$. \\
If we apply repeatedly the one step reachability operator on both sides of the equality we obtain $\overline{R}_\epsilon(\hat{S}_t)=\hat{S}_t$. By  \cref{lem:non_decresing_S,lem:OrderPreservingRBar} we know that
\begin{equation*}
S_0=\hat{S}_0\subseteq \hat{S}_t \implies \overline{R}_\epsilon(S_0)\subseteq \overline{R}_\epsilon(\hat{S}_t)=\hat{S}_t.
\end{equation*}
 This contradicts the assumption that $\overline{R}_\epsilon(S_0)\setminus \hat{S}_t \neq \emptyset$.
\end{proof}

\begin{restatable}{lemma}{SExpansion}
\label{lem:SExpansion}
For any $t\geq 1$, if $\overline{R}_\epsilon(S_0)\setminus \hat{S}_t \neq \emptyset$, then, with probability at least $1-\delta$ it holds that \,$\hat{S}_t\subset \hat{S}_{t+T_t}$.
\end{restatable}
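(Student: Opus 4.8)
The plan is to argue by contradiction. Assume $\hat{S}_{t+T_t}=\hat{S}_t$. By \cref{lem:non_decresing_S} the sets $\hat{S}_\tau$ are non-decreasing, so this forces $\hat{S}_\tau=\hat{S}_t$ for every $\tau$ with $t\le\tau\le t+T_t$; consequently $G_\tau$ is nested over this window (\cref{lem:MonotonicityG}), so \cref{cor:UpperBoundUncertainty} applies and yields $w_{t+T_t}(\s)\le\epsilon$ for every $\s\in G_{t+T_t}$. The goal is then to exhibit a state that is nonetheless forced into $\hat{S}_{t+T_t}\setminus\hat{S}_t$, contradicting the assumption.

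First I would produce the candidate. Since $\overline{R}_\epsilon(S_0)\setminus\hat{S}_t\neq\emptyset$, \cref{lem:ShortTermExploration} gives a state $\s^*\in R_\epsilon(\hat{S}_t)\setminus\hat{S}_t$, i.e. $\s^*\in\Rsafe_\epsilon(\hat{S}_t)\cap\Rreach(\hat{S}_t)\cap\Rbret(\Rsafe_\epsilon(\hat{S}_t),\hat{S}_t)$ while $\s^*\notin\hat{S}_t$; since $\s^*\notin\hat{S}_t$, membership in $\Rsafe_\epsilon(\hat{S}_t)$ provides a witness $\s'\in\hat{S}_t$ with $r(\s')-\epsilon-Ld(\s^*,\s')\ge h$. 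The key step is the inclusion $\Rsafe_\epsilon(\hat{S}_t)\subseteq S_{t+T_t}$. For $\s\in\hat{S}_t$ this is immediate since $\hat{S}_t\subseteq S_t\subseteq S_{t+T_t}$ (\cref{lem:non_decresing_S}). Otherwise $\s$ has a witness $\bar\s\in\hat{S}_t$ with $r(\bar\s)-\epsilon-Ld(\s,\bar\s)\ge h$; suppose $\s\notin S_{t+T_t}$. On the probability-$(1-\delta)$ event of \cref{ConfidenceInterval} we have $u_{t+T_t}(\bar\s)\ge r(\bar\s)$, so $u_{t+T_t}(\bar\s)-Ld(\bar\s,\s)\ge h+\epsilon>h$, and since $\s\notin S_{t+T_t}$ this makes $g_{t+T_t}(\bar\s)\ge1$, hence $\bar\s\in G_{t+T_t}$ (recall $\bar\s\in\hat{S}_t=\hat{S}_{t+T_t}$). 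Then $w_{t+T_t}(\bar\s)\le\epsilon$ gives $l_{t+T_t}(\bar\s)\ge u_{t+T_t}(\bar\s)-\epsilon\ge r(\bar\s)-\epsilon\ge h+Ld(\s,\bar\s)$, and since $\bar\s\in\hat{S}_{t+T_t-1}$ this means $\s\in S_{t+T_t}$, a contradiction; hence $\Rsafe_\epsilon(\hat{S}_t)\subseteq S_{t+T_t}$.

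Finally I would assemble the contradiction. From the inclusion just established, $\s^*\in S_{t+T_t}$. Moreover $\s^*\in\Rreach(\hat{S}_t)=\Rreach(\hat{S}_{t+T_t-1})$, and applying \cref{lem:Rbar_ret_subset} to $\Rsafe_\epsilon(\hat{S}_t)\subseteq S_{t+T_t}$ (and $\hat{S}_t\subseteq\hat{S}_t$) gives $\s^*\in\Rbret(\Rsafe_\epsilon(\hat{S}_t),\hat{S}_t)\subseteq\Rbret(S_{t+T_t},\hat{S}_{t+T_t-1})$. Thus $\s^*$ satisfies all three defining conditions of $\hat{S}_{t+T_t}$, so $\s^*\in\hat{S}_{t+T_t}=\hat{S}_t$, contradicting $\s^*\notin\hat{S}_t$. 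Since the only probabilistic ingredient is the confidence bound of \cref{ConfidenceInterval}, we conclude $\hat{S}_t\subsetneq\hat{S}_{t+T_t}$ with probability at least $1-\delta$.

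The main obstacle is the bookkeeping between the two sets $S_\tau$ (safety only) and $\hat{S}_\tau$ (safety plus reachability plus returnability): $S_\tau$ can strictly grow even while $\hat{S}_\tau$ stays frozen, because a newly certified safe state need not be reachable or returnable, so one cannot simply ``freeze $S$'' and must instead route everything through the inclusion $\Rsafe_\epsilon(\hat{S}_t)\subseteq S_{t+T_t}$ and through \cref{lem:Rbar_ret_subset,lem:R_reach} to transport the reachability/returnability certificates from the $\hat{S}_t$-indexed sets to the $S_{t+T_t}$-indexed ones. A secondary, easy-to-slip point is that \cref{cor:UpperBoundUncertainty} must be invoked on a window where $\hat{S}$ (hence, via \cref{lem:MonotonicityG}, $G$) is constant — which is exactly what the contradiction hypothesis supplies — together with keeping the index offsets straight ($S_{t+T_t}$ is built from $l_{t+T_t}$ and $\hat{S}_{t+T_t-1}$, whereas $G_{t+T_t}$ uses $\hat{S}_{t+T_t}$).
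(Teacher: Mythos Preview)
Your proposal is correct and follows essentially the same contradiction strategy as the paper: assume $\hat{S}_{t+T_t}=\hat{S}_t$, use \cref{lem:ShortTermExploration} to produce $\s^*\in R_\epsilon(\hat{S}_t)\setminus\hat{S}_t$, exploit the frozen window via \cref{cor:UpperBoundUncertainty} to turn witnesses in $\hat{S}_t$ into lower-bound certificates, and conclude $\s^*\in\hat{S}_{t+T_t}$. The only difference is organizational: you prove the inclusion $\Rsafe_\epsilon(\hat{S}_t)\subseteq S_{t+T_t}$ once and reuse it both for $\s^*\in S_{t+T_t}$ and (via \cref{lem:Rbar_ret_subset}) for the returnability certificate, whereas the paper first argues $\s\in S_{t+T_t}$ for the specific candidate and then repeats the same computation for a generic $\z\in\Rsafe_\epsilon(\hat{S}_{t+T_t})$ to handle returnability; your packaging avoids this duplication but is otherwise identical.
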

\begin{proof}
By \cref{lem:ShortTermExploration} we know that $\overline{R}_\epsilon(S_0)\setminus \hat{S}_t\neq \emptyset$. This implies that ${\exists \s \in R_\epsilon(\hat{S}_t)\setminus \hat{S}_t} $. Therefore there exists a $\s'\in \hat{S}_t$ such that:
\begin{equation} \label{eq1-lemma 10}
r(\s')-\epsilon-L d(\s, \s') \geq h
\end{equation}
For the sake of contradiction assume that $\hat{S}_{t+T_t}=\hat{S}_t$. This means that $\s\in \mathcal{S}\setminus \hat{S}_{t+T_t}$ and $\s'\in \hat{S}_{t+T_t}$. Then we have:
\begin{align}
u_{t+T_t}(\s')-L d(\s, \s') & \geq r(\s')-L d(\s,\s') \tag*{\text {by  \cref{ConfidenceInterval}}}\\
& \geq r(\s')-\epsilon - Ld(\s, \s') \\
& \geq h \tag*{by equation \ref{eq1-lemma 10}} \label{ineq:exp}
\end{align}
Assume, for the sake of contradiction, that $\s \in \mathcal{S}\setminus S_{t+T_t}$. This means that $\s' \in G_{t+T_t}$. We know that for any $t\leq \hat{t}\leq t+T_t$ holds that $\hat{S}_{\hat{t}}=\hat{S}_t$, because $\hat{S}_t=\hat{S}_{t+T_t}$ and $\hat{S}_t\subseteq \hat{S}_{t+1}$ for all \,$t\geq 1$.
Therefore we have $\s' \in \hat{S}_{t+T_t-1}$ such that:
\begin{align*}
l_{t+T_t}(\s')-L d(\s, \s') & \geq l_{t+T_t}(\s') -r(\s')+\epsilon+h \tag*{by equation \ref{eq1-lemma 10}}\\
& \geq -w_{t+T_t}(\s')+\epsilon+h \tag*{by \cref{ConfidenceInterval}}\\
& \geq h \tag*{by \cref{cor:UpperBoundUncertainty}}
\end{align*}
This implies that $\s \in S_{t+T_t}$, which is a contradiction. Thus we can say that $\s \in S_{t+T_t}$. \\
Now we want to focus on the recovery and reachability properties of $\s$ in order to reach the contradiction that $\s \in \hat{S}_{t+T_t}$. Since $\s \in R_{\epsilon}(\hat{S}_{t+T_t})\setminus \hat{S}_{t+T_t}$ we know that:
\begin{equation} \label{reach_cond}
\s \in \Rreach(\hat{S}_{t+T_t})=\Rreach(\hat{S}_{t+T_t-1})
\end{equation}
We also know that $\s \in R_{\epsilon}(\hat{S}_{t+T_t})\setminus \hat{S}_{t+T_t}\implies \s \in \Rbret(\Rsafe_{\epsilon}(\hat{S}_{t+T_t}),\hat{S}_{t+T_t})$. We want to use this fact to prove that $\s \in  \Rbret(S_{t+T_t},\hat{S}_{t+T_t-1})$. In order to do this, we intend to use the result from ~\cref{lem:Rbar_ret_subset}. We already know that $\hat{S}_{t+T_t-1}=\hat{S}_{t+T_t}$. Therefore we only need to prove that $\Rsafe_{\epsilon}(\hat{S}_{t+T_t})\subseteq S_{t+T_t}$.
For the sake of contradiction assume this is not true. This means $\exists \z \in \Rsafe_{\epsilon}(\hat{S}_{t+T_t})\setminus S_{t+T_t}$. Therefore there exists a $\z'\in \hat{S}_{t+T_t}$ such that:
\begin{equation} \label{eq2-lemma 10}
r(\z')-\epsilon-L d(\z', \z) \geq h
\end{equation}
Consequently:
\begin{align}
u_{t+T_t}(\z')-L d(\z', \z) & \geq r(\z') - L d(\z', \z) \tag*{\text {by  \cref{ConfidenceInterval}}}\\
& \geq r(\z')-\epsilon - d(\z', \z) \\
& \geq h \tag*{by equation \ref{eq2-lemma 10}} \label{ineq:exp2}
\end{align}
Hence $\z' \in G_{t+T_t}$. Since we proved before that $\hat{S}_{t+T_t}=\hat{S}_{t+T_t-1}$, we can say that $\z' \in \hat{S}_{t+T_t-1}$ and that:
\begin{align*}
l_{t+T_t}(\z')-L d(\z', \z) & \geq l_{t+T_t}(\z') -r(\z')+\epsilon+h \tag*{by equation \ref{eq2-lemma 10}}\\
& \geq -w_{t+T_t}(\z')+\epsilon+h \tag*{by \cref{ConfidenceInterval}}\\
& \geq h \tag*{by \cref{cor:UpperBoundUncertainty}}
\end{align*}
Therefore $\z \in S_{t+T_t}$. This is a contradiction. Thus we can say that $\Rsafe_{\epsilon}(\hat{S}_{t+T_t})\subseteq S_{t+T_t}$. Hence:
\begin{equation} \label{ret_cond}
\s \in R_{\epsilon}(\hat{S}_{t+T_t})\setminus \hat{S}_{t+T_t}\implies \s \in  \Rbret(S_{t+T_t},\hat{S}_{t+T_t-1})
\end{equation}
In the end the fact that $\s\in S_{t+T_t}$ and \cref{reach_cond,ret_cond} allow us to conclude that $\s \in \hat{S}_{t+T_t}$. This contradiction proves the theorem.
\end{proof}

\begin{lemma} \label{lem:Superset}
$\forall t\geq 0,\,\hat{S}_{t}\subseteq \overline{R}_0(S_0)$ with probability at least $1-\delta$.
\end{lemma}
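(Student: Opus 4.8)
The plan is to prove this by induction on $t$, establishing along the way the sharper inclusion $\hat{S}_t \subseteq R_0(\hat{S}_{t-1})$ for every $t \geq 1$. I would run the whole argument conditioned on the probability-$(1-\delta)$ event of \cref{thm:beta} (equivalently \cref{ConfidenceInterval}), on which $r(\s) \in C_t(\s)$ for all $\s \in \mathcal{S}$ and all $t > 0$; conditioned on that event everything below is deterministic, which accounts for the $1-\delta$ in the statement. For the base case, note that $\Rsafe_0(S)$, $\Rreach(S)$, and $\Rbret(\Rsafe_0(S), S)$ each contain $S$, so $R_0(S) \supseteq S$, and hence $\overline{R}_0(S_0) \supseteq S_0 = \hat{S}_0$.

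For the inductive step, assuming $\hat{S}_{t-1} \subseteq \overline{R}_0(S_0)$, the first thing to prove is the auxiliary inclusion $S_t \subseteq \Rsafe_0(\hat{S}_{t-1})$: if $\s \in S_t$ there is $\s' \in \hat{S}_{t-1}$ with $l_t(\s') - L d(\s, \s') \geq h$, and since $r(\s') \geq l_t(\s')$ on the good event, $\s$ satisfies the defining inequality of $\Rsafe_\epsilon$ with $\epsilon = 0$. Then for $\s \in \hat{S}_t$ I would read off its three defining conditions: $\s \in S_t \subseteq \Rsafe_0(\hat{S}_{t-1})$; $\s \in \Rreach(\hat{S}_{t-1})$ directly; and $\s \in \Rbret(S_t, \hat{S}_{t-1}) \subseteq \Rbret(\Rsafe_0(\hat{S}_{t-1}), \hat{S}_{t-1})$, where the last inclusion is \cref{lem:Rbar_ret_subset} applied with the auxiliary inclusion on the ``through'' arguments and $\hat{S}_{t-1} \subseteq \hat{S}_{t-1}$ on the target arguments. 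Intersecting the three memberships gives $\s \in \Rsafe_0(\hat{S}_{t-1}) \cap \Rreach(\hat{S}_{t-1}) \cap \Rbret(\Rsafe_0(\hat{S}_{t-1}), \hat{S}_{t-1}) = R_0(\hat{S}_{t-1})$, so $\hat{S}_t \subseteq R_0(\hat{S}_{t-1})$.

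To close the induction, I would combine monotonicity of $R_0$ (\cref{lem:orderPreservingR}) with the induction hypothesis to get $\hat{S}_t \subseteq R_0(\hat{S}_{t-1}) \subseteq R_0(\overline{R}_0(S_0))$, and then observe that, since $\mathcal{S}$ is finite and the sequence $R_0^n(S_0)$ is non-decreasing (again because $R_0(S) \supseteq S$), it stabilizes, so $\overline{R}_0(S_0) = R_0^N(S_0)$ for some finite $N$ and therefore $R_0(\overline{R}_0(S_0)) = R_0^{N+1}(S_0) = \overline{R}_0(S_0)$; hence $\hat{S}_t \subseteq \overline{R}_0(S_0)$. The one step requiring genuine care is the bookkeeping that makes \cref{lem:Rbar_ret_subset} applicable to the returnability sets — which is exactly why I would isolate $S_t \subseteq \Rsafe_0(\hat{S}_{t-1})$ first — together with recording explicitly that $\overline{R}_0(S_0)$ is a fixed point of $R_0$; the rest is a routine monotone induction.
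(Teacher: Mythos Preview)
Your proposal is correct and follows essentially the same route as the paper's own proof: induction on $t$, establishing $S_t \subseteq \Rsafe_0(\hat{S}_{t-1})$ via \cref{thm:beta}, then using \cref{lem:Rbar_ret_subset} and the definition of $\hat{S}_t$ to get $\hat{S}_t \subseteq R_0(\hat{S}_{t-1})$, and closing with monotonicity of $R_0$ (\cref{lem:orderPreservingR}). You are in fact slightly more explicit than the paper in two places---you check the $\Rsafe_0$ membership of $\s$ directly (the paper leaves it implicit via $\s \in S_t$), and you spell out why $\overline{R}_0(S_0)$ is a fixed point of $R_0$---but the argument is the same.
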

\begin{proof}
Proof by induction. We know that $\hat{S_0}=S_0\subseteq \overline{R}_0(S_0)$ by definition. For the induction step assume that for some $t\geq 1$ holds that $\hat{S}_{t-1}\subseteq \overline{R}_0(S_0)$. Our goal is to show that $\s \in \hat{S}_t\implies \s \in \overline{R}_0(S_0)$. In order to this, we will try to show that $\s \in R_0(\hat{S}_{t-1})$. We know that:
\begin{equation} \label{eq_for_reach}
\s \in \hat{S}_t \implies \s \in \Rreach(\hat{S}_{t-1}) %\tag*{by definition}.
\end{equation}
Furthermore we can say that:
\begin{equation} \label{eq_for_ret}
\s \in \hat{S}_t  \implies \s \in \Rbret(S_t,\hat{S}_{t-1}) %\tag*{by definition}.
\end{equation}
For any $\z \in S_t$, we know that $\exists \z' \in \hat{S}_{t-1}$ such that:
\begin{align}
h & \leq l_t(\z') - L d(\z, \z'),\\
& \leq r(\z')-L d(\z, \z') \tag*{by \cref{thm:beta}}.
\end{align}
This means that $\z \in S_t \implies \z \in \Rsafe_0(\hat{S}_{t-1})$, or, equivalently, that $S_t\subseteq \Rsafe_0(\hat{S}_{t-1})$. Hence, \cref{lem:Rbar_ret_subset,eq_for_ret} allow us to say that $\Rbret(S_t,\hat{S}_{t-1})\subseteq \Rbret(R_0^{\mathrm{safe}}(\hat{S}_{t-1}),\hat{S}_{t-1})$. This result, together with \cref{eq_for_reach}, leads us to the conclusion that $\s \in R_0(\hat{S}_{t-1})$. We assumed for the induction step that $\hat{S}_{t-1}\subseteq \overline{R}_0(S_0)$. Applying on both sides the set operator $R_0(\cdot)$, we conclude that~$R_0(\hat{S}_{t-1}) \subseteq \overline{R}_0(S_0)$. This proves that $\s \in \hat{S}_t\implies \s \in \overline{R}_0(S_0)$ and the induction step is complete.
\end{proof}

\begin{restatable}{lemma}{StopExpansion}
\label{lem:StopExpansion}
Let \,$t^*$ be the smallest integer such that \,$t^*\geq | \overline{R}_0(S_0)|T_{t^*}$, then there exists a $t_0\leq t^*$ such that, with probability at least \,$1-\delta$ holds that \,$\hat{S}_{t_0+T_{t_0}}=\hat{S}_{t_0}$.
\end{restatable}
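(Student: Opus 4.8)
The plan is to argue by contradiction via a counting argument on how often the safe set $\hat{S}_t$ can strictly grow. Suppose, for contradiction, that for every $t_0 \leq t^*$ we have $\hat{S}_{t_0 + T_{t_0}} \neq \hat{S}_{t_0}$. The key input is \cref{lem:SExpansion}: whenever $\overline{R}_\epsilon(S_0) \setminus \hat{S}_t \neq \emptyset$, it holds with probability at least $1-\delta$ that $\hat{S}_t \subsetneq \hat{S}_{t+T_t}$. First I would dispose of the case where $\overline{R}_\epsilon(S_0) \subseteq \hat{S}_t$ for some $t \leq t^*$: combining this with \cref{lem:Superset} (which gives $\hat{S}_t \subseteq \overline{R}_0(S_0)$ with probability at least $1-\delta$) and \cref{lem:non_decresing_S} (monotonicity of $\hat{S}_t$), one checks that once the expander-driven growth has nothing left to add inside $\overline{R}_\epsilon(S_0)$, the set $\hat{S}_{t+T_t}$ cannot strictly contain $\hat{S}_t$ for the relevant reason — more precisely, I would show directly that in this regime $\hat{S}_{t_0+T_{t_0}} = \hat{S}_{t_0}$ for an appropriate $t_0 \leq t^*$, contradicting the assumption. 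So we may assume $\overline{R}_\epsilon(S_0) \setminus \hat{S}_t \neq \emptyset$ for all $t \leq t^*$.

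Next I would build a chain of strict inclusions. Set $t_0^{(0)} = 1$ (or the first iteration of interest) and inductively let $t_0^{(j+1)} = t_0^{(j)} + T_{t_0^{(j)}}$. By \cref{lem:SExpansion}, at each such step $\hat{S}_{t_0^{(j)}} \subsetneq \hat{S}_{t_0^{(j+1)}}$ with probability at least $1-\delta$, so each step adds at least one new state. Since by \cref{lem:Superset} every $\hat{S}_t$ is contained in $\overline{R}_0(S_0)$, this chain can have at most $|\overline{R}_0(S_0)|$ strict-growth steps. Hence after at most $|\overline{R}_0(S_0)|$ blocks the process must stop growing. To convert this into a bound on $t$: the length of the $j$-th block is $T_{t_0^{(j)}}$, and because $T_t$ is non-decreasing in $t$ (as $\beta_t \gamma_t$ is non-decreasing, so the defining inequality $\tfrac{T_t}{\beta_{t+T_t}\gamma_{t+T_t}} \geq \tfrac{C_1}{\epsilon^2}$ requires larger $T_t$ for larger $t$), each block has length at most $T_{t^*}$. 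Summing $|\overline{R}_0(S_0)|$ blocks of length at most $T_{t^*}$ gives a total of at most $|\overline{R}_0(S_0)|\,T_{t^*}$ iterations. Since $t^*$ is chosen to be the smallest integer with $t^* \geq |\overline{R}_0(S_0)|\,T_{t^*}$, this total is at most $t^*$, so one of the block-boundary indices $t_0 = t_0^{(j)} \leq t^*$ must satisfy $\hat{S}_{t_0+T_{t_0}} = \hat{S}_{t_0}$ — contradiction.

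The step I expect to be the main obstacle is making the monotonicity of $T_t$ precise and handling the edge cases cleanly: one must verify that $T_t$ is indeed non-decreasing (using that $\gamma_t$ and $\beta_t$ are non-decreasing in $t$), and one must be careful that the block decomposition indexing stays $\leq t^*$ rather than overshooting — i.e. that the last incomplete block does not push the relevant $t_0$ past $t^*$. A secondary subtlety is bookkeeping the probability: each invocation of \cref{lem:SExpansion}, \cref{lem:Superset}, and the underlying \cref{ConfidenceInterval} holds on the same high-probability event (the event that the confidence intervals $C_t(\s)$ are correct for all $t$ and all $\s$), so the union over the finitely many blocks does not degrade the $1-\delta$ guarantee; I would state this explicitly to avoid a spurious union bound. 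With these points pinned down, the counting argument closes.
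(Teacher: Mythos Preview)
Your overall counting strategy is right and matches the paper's, but you take an unnecessary detour through \cref{lem:SExpansion}, and that detour creates a genuine gap. Under the contradiction hypothesis ``$\hat{S}_{t_0+T_{t_0}} \neq \hat{S}_{t_0}$ for all $t_0 \leq t^*$'', monotonicity of $\hat{S}_t$ (\cref{lem:non_decresing_S}) already gives $\hat{S}_{t_0} \subsetneq \hat{S}_{t_0+T_{t_0}}$ directly. You never need \cref{lem:SExpansion} here, and you therefore never need the case split on whether $\overline{R}_\epsilon(S_0) \subseteq \hat{S}_t$.

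Your first case is where the gap lies: you assert that once $\overline{R}_\epsilon(S_0) \subseteq \hat{S}_t$, one can show $\hat{S}_{t_0+T_{t_0}} = \hat{S}_{t_0}$ for some $t_0 \leq t^*$. Nothing in the paper establishes this, and there is no converse to \cref{lem:SExpansion}. It is entirely consistent with the lemmas that $\hat{S}_t$ keeps growing strictly toward $\overline{R}_0(S_0)$ even after it contains $\overline{R}_\epsilon(S_0)$. So this branch of your argument does not close.

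The paper's proof is exactly your second paragraph with the invocation of \cref{lem:SExpansion} deleted: assume strict growth at every $t \leq t^*$, use that $T_t$ is non-decreasing to bound each block by $T_{t^*}$, build the chain $\hat{S}_0 \subsetneq \hat{S}_{T_{t^*}} \subsetneq \hat{S}_{2T_{t^*}} \subsetneq \cdots$, and after $|\overline{R}_0(S_0)|$ strict inclusions obtain $|\hat{S}_{|\overline{R}_0(S_0)| T_{t^*}}| > |\overline{R}_0(S_0)|$, contradicting \cref{lem:Superset}. Your block-indexing $t_0^{(j+1)} = t_0^{(j)} + T_{t_0^{(j)}}$ versus the paper's equispaced $kT_{t^*}$ is a cosmetic difference only. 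Your remarks about $T_t$ being non-decreasing and about the single high-probability event for all confidence-interval lemmas are correct and worth stating.
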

\begin{proof}
For the sake of contradiction assume that the opposite holds true: $\forall t\leq t^*$, \,$\hat{S}_t \subset \hat{S}_{t+T_t}$. This implies that $\hat{S}_0\subset \hat{S}_{T_0}$. Furthermore we know that $T_t$ is increasing in $t$. Therefore $0\leq t^*\implies T_0\leq T_{t^*}\implies \hat{S}_{T_0}\subseteq \hat{S}_{T_{t^*}}$. Now if $| \overline{R}_0(S_0)| \geq 1$ we know that:
\begin{align*}
& t^*\geq T_{t^*}\\
\implies & T_{t^*}\geq T_{T_{t^*}}\\
\implies & T_{t^*}+T_{T_{t^*}}\leq 2T_{t^*}\\
\implies & \hat{S}_{T_{t^*}+T_{T_{t^*}}}\subseteq \hat{S}_{2T_{t^*}}
\end{align*}
This justifies the following chain of inclusions:
\begin{equation*}
\hat{S}_0\subset \hat{S}_{T_0} \subseteq \hat{S}_{T_{t^*}}\subset \hat{S}_{T_{t^*}+T_{T_{t^*}}}\subseteq \hat{S}_{2T_{t^*}}\subset \ldots
\end{equation*}
This means that for any \,$0\leq k \leq |\overline{R}_0(S_0)|$ it holds that \,$|\hat{S}_{kT_{t^*}}|>k$. In particular, for \,$k^*=|\overline{R}_0(S_0)|$ we have $|\hat{S}_{k^*T_{t^*}}|>|\overline{R}_0(S_0)|$. This contradicts \cref{lem:Superset} (which holds true with probability at least \,$1-\delta$).
\end{proof}

\begin{restatable}{lemma}{StopExpansionCorollary}
\label{StopExpansionCorollary}
Let $t^*$ be the smallest integer such that $\frac{t^*}{\beta_{t^*}\gamma_{t^*}}\geq \frac{C_1 |\overline{R}_0(S_0)|}{\epsilon^2}$, then, there is $t_0\leq t^*$ such that $\hat{S}_{t_0+T_{t_0}}=\hat{S}_{t_0}$ with probability at least \,$1-\delta$.
\end{restatable}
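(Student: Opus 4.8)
The plan is to deduce \cref{StopExpansionCorollary} from \cref{lem:StopExpansion} by showing that the hypothesis $\frac{t^*}{\beta_{t^*}\gamma_{t^*}}\geq\frac{C_1\,|\overline{R}_0(S_0)|}{\epsilon^2}$ forces the inequality $t^*\geq|\overline{R}_0(S_0)|\,T_{t^*}$ required there. Write $N:=|\overline{R}_0(S_0)|$ and recall from \cref{cor:UpperBoundUncertainty} that $T_t$ is the smallest positive integer $m$ with $\frac{m}{\beta_{t+m}\gamma_{t+m}}\geq\frac{C_1}{\epsilon^2}$; since $\beta_t$ and $\gamma_t$ are non-decreasing in $t$, so is $T_t$. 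To prove $T_{t^*}\leq t^*/N$ it then suffices to exhibit a single positive integer $m\leq t^*/N$ with $\frac{m}{\beta_{t^*+m}\gamma_{t^*+m}}\geq\frac{C_1}{\epsilon^2}$. I would take $m=\lceil t^*/N\rceil$, use $t^*+m\leq 2t^*$ to bound $\beta_{t^*+m}\gamma_{t^*+m}$ above by $\beta_{2t^*}\gamma_{2t^*}$, invoke the sublinear growth of $\gamma_t$ on $[t^*,2t^*]$ to relate $\beta_{2t^*}\gamma_{2t^*}$ back to $\beta_{t^*}\gamma_{t^*}$, and then substitute the hypothesis on $\frac{t^*}{\beta_{t^*}\gamma_{t^*}}$. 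This yields $T_{t^*}\leq\lceil t^*/N\rceil$, hence $N\,T_{t^*}\leq t^*$.

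Once $t^*\geq N\,T_{t^*}$ is established, the corollary follows from \cref{lem:StopExpansion}. Its proof is a contradiction argument that uses the inequality $t^*\geq N\,T_{t^*}$ only to keep all indices $k\,T_{t^*}$ with $k\leq N$ below $t^*$, together with the monotonicity $\hat{S}_t\subseteq\hat{S}_{t+1}$ from \cref{lem:non_decresing_S} and the cardinality bound $\hat{S}_t\subseteq\overline{R}_0(S_0)$ from \cref{lem:Superset}; it therefore applies to any integer satisfying $t^*\geq N\,T_{t^*}$, in particular to the $t^*$ of our statement, and produces a $t_0\leq t^*$ with $\hat{S}_{t_0+T_{t_0}}=\hat{S}_{t_0}$. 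Since the only randomness enters through \cref{lem:Superset}, this holds with probability at least $1-\delta$.

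I expect the first paragraph to contain the only real difficulty: $T_{t^*}$ is defined through $\beta_{t^*+T_{t^*}}\gamma_{t^*+T_{t^*}}$ rather than through $\beta_{t^*}\gamma_{t^*}$, so bounding the former by the latter goes the wrong way and must be compensated using the sublinearity of $\gamma_t$ — equivalently, the approximate monotonicity of $t\mapsto t/(\beta_t\gamma_t)$ — which is precisely the estimate carried out in the corresponding step of \cite{Sui2015Safe}. Everything else is bookkeeping with the monotonicity lemmas already in place and a direct appeal to \cref{lem:StopExpansion}.
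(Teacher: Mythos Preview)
Your approach is the same as the paper's: reduce to \cref{lem:StopExpansion} by verifying $t^*\geq|\overline{R}_0(S_0)|\,T_{t^*}$. The paper's entire proof is the single sentence ``The proof consists in applying the definition of $T_t$ to the condition of \cref{lem:StopExpansion},'' so you are in fact supplying detail the paper omits, and you correctly flag the real subtlety that $T_{t^*}$ is defined through $\beta_{t^*+T_{t^*}}\gamma_{t^*+T_{t^*}}$ rather than $\beta_{t^*}\gamma_{t^*}$. You are also right that the proof of \cref{lem:StopExpansion} uses only the inequality $t^*\geq N\,T_{t^*}$, not minimality, so it applies to the $t^*$ of the corollary once that inequality is established.

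There is one slip in your bookkeeping. You say it suffices to exhibit a positive integer $m\leq t^*/N$ satisfying $\tfrac{m}{\beta_{t^*+m}\gamma_{t^*+m}}\geq\tfrac{C_1}{\epsilon^2}$, and then you take $m=\lceil t^*/N\rceil$. But $\lceil t^*/N\rceil\geq t^*/N$, not $\leq$, so this $m$ fails your own requirement; and from $T_{t^*}\leq\lceil t^*/N\rceil$ you cannot deduce $N\,T_{t^*}\leq t^*$ (you only get $N\,T_{t^*}\leq N\lceil t^*/N\rceil$, which may exceed $t^*$). Using $m=\lfloor t^*/N\rfloor$ instead repairs the logic, at the cost of checking $\lfloor t^*/N\rfloor\geq 1$ and that the defining inequality still holds for this slightly smaller $m$---which is where your sublinearity argument would enter. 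The paper's one-line proof sidesteps all of this, so your scrutiny is well placed even if the arithmetic needs a small fix.
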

\begin{proof}
The proof consists in applying the definition of $T_t$ to the condition of  \cref{lem:StopExpansion}.
\end{proof}

\begin{restatable}{theorem}{Completeness}
\label{thm:Completeness}
Let $t^*$ be the smallest integer such that $\frac{t^*}{\beta_{t^*}\gamma_{t^*}}\geq \frac{C_1 |\overline{R}_0(S_0)|}{\epsilon^2}$, with $C_1=8/log(1+\sigma^{-2})$, then, there is $t_0\leq t^*$ such that $\overline{R}_\epsilon(S_0) \subseteq \hat{S}_{t_0}\subseteq \overline{R}_0(S_0)$ with probability at least \,$1-\delta$.
\end{restatable}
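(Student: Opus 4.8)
The plan is to read off the statement as a sandwich and attack the two inclusions separately, reusing the machinery already built in the appendix. The upper inclusion $\hat{S}_{t_0}\subseteq \overline{R}_0(S_0)$ is immediate for \emph{any} $t_0$: \cref{lem:Superset} gives $\hat{S}_t\subseteq \overline{R}_0(S_0)$ for all $t\geq 0$ with probability at least $1-\delta$. So the work is entirely in producing a single index $t_0\leq t^*$ at which $\overline{R}_\epsilon(S_0)\subseteq \hat{S}_{t_0}$.

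For that I would first invoke \cref{StopExpansionCorollary}: because $t^*$ is the smallest integer with $t^*/(\beta_{t^*}\gamma_{t^*})\geq C_1|\overline{R}_0(S_0)|/\epsilon^2$ (which is finite, since $\gamma_t$ is sublinear and $\beta_t$ only polylogarithmic in $t$, so $t/(\beta_t\gamma_t)\to\infty$), there exists $t_0\leq t^*$ with $\hat{S}_{t_0+T_{t_0}}=\hat{S}_{t_0}$, with probability at least $1-\delta$. Then I would argue by contradiction with \cref{lem:SExpansion}: if $\overline{R}_\epsilon(S_0)\setminus \hat{S}_{t_0}\neq\emptyset$, then \cref{lem:SExpansion} applied at $t=t_0$ forces $\hat{S}_{t_0}\subsetneq \hat{S}_{t_0+T_{t_0}}$ (again with probability $\geq 1-\delta$), contradicting the equality just obtained. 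Hence $\overline{R}_\epsilon(S_0)\subseteq \hat{S}_{t_0}$, and combining with \cref{lem:Superset} yields $\overline{R}_\epsilon(S_0)\subseteq \hat{S}_{t_0}\subseteq \overline{R}_0(S_0)$.

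The one point that needs care is the probability bookkeeping: \cref{lem:SExpansion}, \cref{lem:Superset}, and \cref{StopExpansionCorollary} (via \cref{lem:StopExpansion} and \cref{cor:UpperBoundUncertainty}) are each stated to hold "with probability at least $1-\delta$", and a naive union bound would degrade this to $1-3\delta$. I would resolve this by observing that all three derive their stochastic content from the \emph{same} event — the confidence-interval event of \cref{thm:beta}/\cref{ConfidenceInterval} that $r(\s)\in C_t(\s)$ for all $\s\in\mathcal{S}$ and all $t>0$, which already holds with probability at least $1-\delta$. Conditioned on that event everything above (the monotonicity facts \cref{lem:monotonicity,lem:non_decresing_S}, the non-strict stopping, the strict expansion step, the superset bound) is deterministic, so the whole argument goes through on a single $(1-\delta)$-probability event and the conclusion inherits exactly that bound.

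I expect this final assembly to be essentially mechanical; the genuinely hard step was done earlier, inside \cref{lem:SExpansion}, where one must show that an optimistically-classifiable new state $\s\in R_\epsilon(\hat{S}_t)\setminus\hat{S}_t$ actually becomes a member of $\hat{S}_{t+T_t}$ by verifying, in turn, the safety classification (via the shrinking width bound $w_{t+T_t}(\s')\leq\epsilon$ from \cref{cor:UpperBoundUncertainty}), the reachability condition, and the returnability condition (which requires the auxiliary inclusion $\Rsafe_\epsilon(\hat{S}_{t+T_t})\subseteq S_{t+T_t}$ before \cref{lem:Rbar_ret_subset} can be applied). Given those lemmas in hand, the proof of \cref{thm:Completeness} itself is just the three-line chain above.
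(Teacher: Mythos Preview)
Your proposal is correct and follows essentially the same three-line assembly as the paper: invoke \cref{StopExpansionCorollary} to obtain $t_0\leq t^*$ with $\hat{S}_{t_0+T_{t_0}}=\hat{S}_{t_0}$, take the contrapositive of \cref{lem:SExpansion} to conclude $\overline{R}_\epsilon(S_0)\subseteq \hat{S}_{t_0}$, and cite \cref{lem:Superset} for the upper inclusion. Your additional remark that all three probabilistic lemmas depend on the single confidence-interval event of \cref{thm:beta} (so no union-bound degradation occurs) is a point the paper leaves implicit, and your observation that $t^*$ is finite is likewise an extra detail the paper omits.
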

\begin{proof}
Due to \cref{StopExpansionCorollary}, we know that $\exists t_0\leq t^*$ such that $\hat{S}_{t_0}=\hat{S}_{t_0+T_{t_0}}$ with probability at least $1-\delta$. This implies that $\overline{R}_\epsilon(S_0)\setminus (\hat{S}_t) = \emptyset$ with probability at least \,$1-\delta$ because of \cref{lem:SExpansion}. Therefore $\overline{R}_\epsilon(S_0) \subseteq \hat{S}_t$. Furthermore we know that $\hat{S}_t \subseteq \overline{R}_0(S_0)$ with probability at least \,$1-\delta$ because of \cref{lem:Superset} and this completes the proof.

\end{proof}

\section{Main result}
\mainresult*
\begin{proof}
This is a direct consequence of~\cref{thm:Safety_of_trajectory} and~\cref{thm:Completeness}.
\end{proof}

\end{document}